\newcommand{\cN}{\mathcal{N}}
\newcommand{\cL}{\mathcal{L}}
\newcommand{\bE}{\mathbb{E}}
\newcommand{\bR}{\mathbb{R}}
\newcommand{\bN}{\mathbb{N}}
\newcommand{\bZ}{\mathbb{Z}}
\newcommand{\sfN}{\mathsf{N}}
\newtheorem{theorem}{Theorem}[section]
\newtheorem{lemma}[theorem]{Lemma}
\newtheorem{assumption}[theorem]{Assumption}
\newtheorem{proposition}[theorem]{Proposition}
\newtheorem{example}[theorem]{Example}
\theoremstyle{remark}
\newtheorem{remark}[theorem]{Remark}
\numberwithin{equation}{section}
\newenvironment{newremark}[1]{%
    \begin{remark}#1}{%
    \Endofdef\end{remark}%
}
\newcommand{\xqed}[1]{%
    \leavevmode\unskip\penalty9999 \hbox{}\nobreak\hfill
    \quad\hbox{\ensuremath{#1}}}
\newcommand{\Endofdef}{\xqed{\lozenge}}
\definecolor{darkred}{rgb}{.7,0,0}
\definecolor{darkgreen}{rgb}{.15,.55,0}
\definecolor{darkblue}{rgb}{0,0,0.7}
\newcommand{\edit}[1]{{\color{black} #1}}
\title[Convergence of Unadjusted Langevin in High Dimensions]{Convergence of Unadjusted Langevin in High Dimensions: Delocalization of Bias}
\author{Yifan~Chen\textsuperscript{1}}
\author{Xiaoou~Cheng}
\author{Jonathan Niles-Weed}
\author{Jonathan Weare}
\address{Courant Institute, New York University, NY, USA}
\email{yifanchen@math.ucla.edu, chengxo@nyu.edu, jnw@cims.nyu.edu, weare@nyu.edu}
\address{\textsuperscript{1}Now at Department of Mathematics, University of California, Los Angeles, CA, USA}
\begin{document}
\maketitle
\vspace{-2em}
\begin{abstract}
The unadjusted Langevin algorithm is commonly used to sample probability distributions in extremely high-dimensional settings. However, existing analyses of the algorithm for strongly log-concave distributions suggest that, as the dimension $d$ of the problem increases, the number of iterations required to ensure convergence within a desired error in the $W_2$ metric  scales in proportion to $d$ or $\sqrt{d}$. In this paper, we argue that, despite this poor scaling of the $W_2$ error for the full set of variables,  the behavior for a \emph{small number} of variables can be significantly better: a number of iterations proportional to $K$, up to logarithmic terms in $d$,
often suffices for the algorithm to converge to within a desired $W_2$ error for all $K$-marginals.
We refer to this effect as 
\textit{delocalization of bias}. 
We show that the delocalization effect does not hold universally and prove its validity for Gaussian distributions and strongly log-concave distributions with certain sparse interactions. Our analysis relies on a novel $W_{2,\ell^\infty}$ metric to measure convergence. A key technical challenge we address is the lack of a one-step contraction property in this metric. Finally, we use asymptotic arguments to explore potential generalizations of the delocalization effect beyond the Gaussian and sparse interactions setting.
\end{abstract}


\section{Introduction}
Overdamped Langevin dynamics
\begin{equation}
\label{eqn-langevin}
\mathrm{d}X_t = -\nabla V(X_t) \mathrm{d}t + \sqrt{2}\mathrm{d}B_t\, ,
\end{equation}
has been used extensively
to sample from high dimensional probability distributions in applications ranging from
molecular dynamics to Bayesian inverse problems and data assimilation \cite{pavliotis2014stochastic}.
Here $V: \mathbb{R}^d \to \mathbb{R}$ is a function in $\mathbb{R}^d$ referred to as the potential and $B_t$ is the $d$-dimensional Brownian motion. The target distribution $\pi$ is proportional to $\exp(-V)$, which is the stationary distribution of \eqref{eqn-langevin}.

The overdamped Langevin Monte Carlo algorithm, also known as the unadjusted Langevin algorithm, is obtained by applying the Euler–Maruyama scheme to \eqref{eqn-langevin}:
\begin{equation}
\label{eqn-OLMC}
    X_{(k+1)h} = X_{kh}-h\nabla V(X_{kh}) + \sqrt{2}(B_{(k+1)h} - B_{kh})\, ,
\end{equation}
    where $h$ is the step size. The distribution of $X_{kh}$ is denoted by $\rho_{kh}$.
    
\subsection{Motivation} The unadjusted Langevin algorithm \eqref{eqn-OLMC} is biased, meaning that even when $k \to \infty$, the distribution $\rho_{kh}$ will not converge to the exact target distribution $\pi$ for any finite step size $h$. For strongly log-concave distributions, $\rho_{kh}$ converges in the $W_2$ metric to the stationary distribution of \eqref{eqn-OLMC}, $\pi_h$, at a rate that is independent of dimension (see, e.g., \cite{durmusmoulines2019highdim}). Consequently, the dependence of the error $W_2(\rho_{kh}, \pi)$ on dimension, $d$, is completely determined by the bias $W_2(\pi_h,\pi)$.
This bias depends on the step size $h$, which, according to state-of-the-art analyses, needs to scale proportionally to $1/d$ or $1/\sqrt{d}$ to achieve a bounded bias for any $d$. 
Since the iteration complexity typically scales as $1/h$ (up to logarithmic factors), this small step size implies a computational cost of order $\sqrt{d}$ or $d$.

Unbiased samplers based on \eqref{eqn-langevin}, such as the Metropolis-adjusted Langevin algorithm \cite{rossky1978brownian,roberts1996exponential} and proximal samplers \cite{Lee2021proximal,Chen2022proximal}, are available,
but these algorithms also require a small step size when $d$ is large in order to ensure that the acceptance rate is non-negligible; see the review in Section \ref{sec-subsec-literature-review}.  
In short, existing theoretical analyses of both the unadjusted Langevin algorithm and unbiased variants require step sizes scaling as $d^{-c}$ for some $c > 0$. In the former case, small step sizes are required to ensure small bias; in the latter, small step sizes are required to maintain a reasonable acceptance rate.

In the case of the unadjusted Langevin algorithm, however, these theoretical predictions seem at odds with abundant empirical evidence that the scheme samples efficiently in extremely high dimensions.
As an example, molecular dynamics simulations using integrators closely related to \eqref{eqn-OLMC} with up to billions of atoms are not uncommon \cite{Gapsys2024bigmd}. In these simulations, the step size is typically set to several femtoseconds, irrespective of the system size \cite{alma9998721193507871}.

In this paper, we argue that the mismatch between theory and practice is due, in large part, to the metric used to measure convergence. Standard metrics such as $W_2$ measure accuracy of the entire distribution, but practitioners are often interested in averages of functions involving a relatively small set of variables. \edit{Indeed, both statistical and physical models often require the inclusion of many latent variables (e.g. solvent variables in molecular dynamics simulations) that are themselves of no direct interest. 
}

Based on these observations, we aim to investigate the convergence behavior of the algorithm under alternative metrics designed to characterize the accuracy of low-dimensional marginals. We identify a new ``delocalization'' phenomenon for the bias of the unadjusted Langevin algorithm to show that, even in high-dimensional settings, the step size may not need to be very small if the quantities of interest depend on low-dimensional marginals only. Consequently, the iteration complexity required to achieve a bounded error exhibits a benign dependence on dimension $d$.

\subsection{Literature review} 
\label{sec-subsec-literature-review}
We begin by reviewing existing works on the analysis of Langevin algorithms, with a particular focus on the dependence on dimension $d$. Analysis of \eqref{eqn-OLMC} dates back to the work of Roberts and Tweedie \cite{roberts1996exponential}, where asymptotic properties such as ergodicity of the discrete Markov chain are studied. Asymptotic bias of the discrete SDEs can be investigated using Taylor's expansion and the Poisson equation \cite{talay1990second,talay1990expansion,bally1996law,mattingly2010convergence}. Non-asymptotic analysis of the algorithm \eqref{eqn-OLMC} aims to characterize the step size and iteration complexity to achieve bounded error under certain metrics; again, the iteration complexity typically scales inversely with the step size (up to logarithmic factors of $d$). Here, we focus on the scaling of the step size  with dimension. A large body of work considers strongly log-concave distributions:
\begin{assumption}
 \label{assumption-V-log-concave}
     Let $\pi \propto \exp(-V)$ and $V \in C^2(\bR^d)$. Assume $V$ is $\alpha$-strongly convex and $\beta$-smooth such that $\alpha I \preceq \nabla^2 V(x) \preceq \beta I$ at any $x \in \bR^d$. Here $0 < \alpha \leq \beta < \infty$. 
 \end{assumption}
 \vspace{-0.05em}
Under Assumption \ref{assumption-V-log-concave}, existing non-asymptotic bounds for the unadjusted Langevin algorithm \eqref{eqn-OLMC} mainly focus on metrics such as the total variation distance, the $W_2$ distance, and the Kullback-Leibler (KL) divergence; see \cite{Dalalyan2017theoretical, Dalalyan2017further, durmus2017nonasymptotic, durmusmoulines2019highdim, cheng2018convergence, durmus2019analysis}. In these studies, to maintain a bounded error or bias as the dimension $d$ increases, the step size $h$ must scale inversely with $d$, i.e., $h \sim 1/d$.
With additional smoothness assumptions on the Hessian of $V$, the bound on the step size can be improved to $h \sim 1/\sqrt{d}$; see \cite{li2022sqrtd} and the mean squared analysis framework developed in \cite{li2019stochastic}. Other convergence results exist, such as those using the $\chi^2$ and R\'enyi divergences as metrics \cite{Vempala2019, Erdogdu2022renyi}, under weaker assumptions like the log-Sobolev inequality \cite{Vempala2019,chewi2022analysis} or other conditions on $\pi$ \cite{chatterji2020langevin, erdogdu2021convergence, mou2022improved, lehec2023langevin}, and alternative discretization schemes \cite{shen2019randomized,he2020ergodicity}. All these results lead to power law scaling of the step size with $d$ (up to logarithmic terms) to attain a desired error for any $d$.

Metropolis-adjusted Langevin algorithms (MALA) \cite{roberts1996exponential} and proximal samplers \cite{Lee2021proximal} are two common unbiased schemes for the Langevin dynamics \eqref{eqn-langevin}. For MALA, the asymptotic analysis in \cite{Roberts1998scaling}  shows that, when $h \sim 1/d^{1/3}$, the algorithm admits a non-degenerate diffusion limit as $d \to \infty$, provided $\pi$ is a smooth product measure and the algorithm starts at stationarity; see also further generalizations beyond product measures and stationarity \cite{breyer2004optimal,christensen2005scaling,Pillai2012scaling, kuntz2018non} and note that  out of stationarity, the results suggest the scaling $h \sim 1/\sqrt{d}$. Non-asymptotic bounds on the mixing time of MALA have also been investigated in the literature \cite{Dwi2018fast, Chen2020fastmixinghmc, Chewi2021optimal, wu2022minimax, chen2023simple}. These bounds require a specific choice of step size, scaling as $h \sim 1/\sqrt{d}$ for MALA initialized at a warm start and $h \sim 1/d$ for a feasible start, up to $\log d$ terms; see also related studies on the lower bounds of the mixing time \cite{Chewi2021optimal, lee2021lower, wu2022minimax}.
   For proximal samplers \cite{Lee2021proximal}, a key component is the implementation of the restricted Gaussian oracle (RGO), and the step size $h$ needs to be small to ensure efficient implementation. Existing analyses suggest choosing $h \sim 1/d$ if the RGO is implemented via rejection sampling \cite{Chen2022proximal}, and $h \sim 1/\sqrt{d}$ if it is implemented via approximate rejection sampling \cite{Fan2023improveddim}.  

    In summary, for both MALA and proximal samplers, choosing the step size inversely proportional to  $d$ or $\sqrt d$ is necessary for efficient implementation and to achieve the desired mixing time bounds. For the unadjusted Langevin algorithm, similar scaling is required to attain a bounded error on the bias under the aforementioned metrics. 
    
    Under this context, the goal of this paper is to show that the requirement on the step size in the unadjusted Langevin algorithm can be significantly improved if we use an alternative $W_{2,\ell^{\infty}}$ metric to measure convergence.
    The convergence bounds we prove in this metric indicate that, in some situations, the bias of the unajusted Langevin algorithm is \emph{delocalized}, in the sense that the bias in individual coordinates is nearly dimension-free. 
    This is precisely the behavior observed when $\pi$ is a product measure. Our results show that this phenomenon holds in wider generality.
     
     References \cite{bou2023convergence} and \cite{durmus2021asymptotic} studied  target distributions for which larger stepsizes are possible in high-dimensions if the observable of interest has Lipschitz constant that scales as $1/\sqrt{d}$, 
     such as the averaged observable $f(x)=\frac{1}{d}\sum_{i=1}^d\Phi(x^{(i)})$ with $\ell^2$-Lipschitz $\Phi$.
     In these cases, a constant step size is sufficient to achieve a bounded error regardless of dimensionality.
    \subsection{Main results}
    \label{sec-intro-main-result}
    In this section, we outline our main results, including a novel metric $W_{2,\ell^{\infty}}$ to measure convergence, two positive and one negative example that illustrate the delocalization effects of bias, and our main theorem regarding the convergence result for strongly log-concave distributions with sparse interactions.
    \subsubsection{A new metric to measure convergence} 
    \label{sec-a-new-metric-measure-convergence}
     We introduce the following $W_{p,\ell^{\infty}}$ metric:
    \begin{equation}
        W_{p,\ell^{\infty}}(\mu, \nu) = \left( \min_{\gamma \in \Pi(\mu,\nu)} \int |x-y|^p_\infty \gamma({\rm d}x,{\rm d}y)\right)^{1/p}\, ,
    \end{equation}
where $|\cdot|_{\infty}$ is the $\ell^{\infty}$ norm of a vector and $\Pi(\mu,\nu)$ represents the set of measures in the joint space $\bR^d \times \bR^d$ that have marginals $\mu, \nu$. \edit{We also use $|M|_{\infty}$ to denote the $\ell^\infty \to \ell^\infty$ operator norm of a matrix $M \in \bR^{d\times d}$ throughout this article:
\begin{equation*}
    |M|_{\infty} = \sup_{x \in \bR^d: x \neq 0} \frac{|M x|_\infty}{|x|_\infty}\,.
\end{equation*}
Similarly, $|\cdot|_2$ stands for the $\ell^2$ norm and, when applied to matrices, to the corresponding operator norm.}

We note that $|x-y|_{\infty}\geq |x^{(j)} - y^{(j)}|$ for any $1\leq j \leq d$, where we use the superscript $(j)$ to denote the $j$-th component of a vector. This observation implies that $W_{p,\ell^{\infty}}(\mu, \nu)$ serves as an upper bound for the $W_p$ distance between one-dimensional marginals of $\mu$ and $\nu$. Moreover, since $K|x-y|_{\infty}^p\geq \sum_{t=1}^K|x^{(j_t)} - y^{(j_t)}|^p$ for any $1\leq j_t \leq d$, we have that $ K^{1/p} \cdot W_{p,\ell^{\infty}}(\mu, \nu)$ serves as the upper bound for the $W_p$ distance between any $K$-dimensional marginals of $\mu$ and $\nu$. In summary, the metric $W_{p,\ell^{\infty}}$ is capable of measuring the accuracy of low-dimensional marginals. In this paper, we specifically focus on the case $p=2$, i.e., the $W_{2,\ell^{\infty}}$ metric.

Another important observation is that 
\[W_{2,\ell^\infty}(\rho_{kh}, \pi) \leq W_{2,\ell^\infty}(\rho_{kh}, \pi_h) + W_{2,\ell^\infty}(\pi_h, \pi) \leq W_2(\rho_{kh}, \pi_h) + W_{2,\ell^\infty}(\pi_h, \pi)\, ,\]
which, combined with the dimension-independent contraction result in the $W_2$ metric, implies that the $W_{2,\ell^\infty}$ bias  governs the dependence of the $W_{2,\ell^{\infty}}$ error on $d$.

\edit{We note that Langevin dynamics is rotation invariant and not sensitive to the choice of basis, as is the standard $W_2$ metric. However, a distinct feature of the introduced $W_{2,\ell^\infty}$ metric is its dependence on the coordinate system. In fact, as we will see later, the delocalization effect studied in this paper is coordinate dependent, which also justifies the use of the coordinate-dependent $W_{2,\ell^\infty}$ metric.}

\subsubsection{Positive and negative examples: delocalization effect} How should we expect the $W_{2,\ell^{\infty}}$ bias of \eqref{eqn-OLMC} behave? To motivate the discussion, we first consider the examples of product measures and Gaussian measures. We begin with the product measure case for which a bound on $W_{2,\ell^{\infty}}(\pi_h,\pi)$ can be obtained by a contraction argument similar to the $W_2$ analysis (e.g., \cite{Dalalyan2017further}). We include a sketch of the argument for later reference in the sketch of the proof of our main result, Theorem \ref{thm-OLD-bias-under-sparsity} in Section \ref{sec-Convergence-bound-sparse-model}.
\begin{example}
\label{intro-product-measure-example}
    Consider $\pi \propto \exp(-V)$ where $V(x) = \sum_{i=1}^d V_i(x^{(i)})$ satisfies $\alpha \leq \nabla^2 V_i \leq \beta$.  Then, for \edit{$h \leq 1/\beta$}, it holds that $$W_{2,\ell^{\infty}}(\pi_h,\pi) = O( \frac{\beta}{\alpha}\sqrt{h\log (2d)}) \, .$$
\end{example}
\begin{proof}[Sketch of Proof]
    Given $k\in \bN$, we couple the continuous-time Langevin dynamics at stationarity $Y_t,\ kh\leq t \leq (k+1)h$ and the discrete-time iterates $X_{kh}$ in \eqref{eqn-OLMC} so that they share the same Brownian motion. Introducing an auxiliary random variable
       $\overline{Y}_{(k+1)h} = Y_{kh} - h \nabla V(Y_{kh}) + \sqrt{2}(B_{(k+1)h} - B_{kh})$
   and using the triangle inequality then lead to 
   \begin{equation*}
       \sqrt{\bE[|X_{(k+1)h} - Y_{(k+1)h}|_{\infty}^2]} \leq \underbrace{\sqrt{\bE[|X_{(k+1)h} - \overline{Y}_{(k+1)h}|_{\infty}^2]}}_{(a)} + \underbrace{\sqrt{\bE[| \overline{Y}_{(k+1)h} - Y_{(k+1)h}|_{\infty}^2]}}_{(b)}\, .
   \end{equation*}
   Here $(b)$ is the one-step discretization error which is bounded by $O( \beta h^{3/2}\sqrt{\log (2d)})$ given \edit{$h \leq 1/\beta$}. And $(a) = \sqrt{\bE[|X_{kh} - Y_{kh} - h(\nabla V(X_{kh}) - \nabla V(Y_{kh}))|_{\infty}^2]} \leq (1-\alpha h)\sqrt{\bE[|X_{kh} - Y_{kh}|_{\infty}^2]} \leq \exp(-\alpha h)\sqrt{\bE[|X_{kh} - Y_{kh}|_{\infty}^2]}$ where we apply the strong convexity of $V_i$ to bound each coordinate of the vector yielding the final $\ell^\infty$ norm bound. This shows that there is a one-step contraction in the $\ell^\infty$ norm. We then couple the distribution of $X_{kh}$ and $Y_{kh}$ such that $\sqrt{\bE[|X_{kh} - Y_{kh}|_{\infty}^2]} = W_{2,\ell^{\infty}}(\rho_{kh},\pi)$. Using the definition of the $W_{2,\ell^{\infty}}$ norm, we get
   \[W_{2,\ell^{\infty}}(\rho_{(k+1)h},\pi) \leq \exp(-\alpha h)W_{2,\ell^{\infty}}(\rho_{kh},\pi) + O( \beta h^{3/2}\sqrt{\log (2d)})\, . \]
   Iterating this inequality implies $W_{2,\ell^{\infty}}(\pi_h,\pi) = O( \frac{\beta}{\alpha}\sqrt{h\log (2d)})$. A detailed proof can be found in Appendix \ref{appendix-proof-product-measure}.
\end{proof}
Example \ref{intro-product-measure-example} shows that the $W_{2,\ell^{\infty}}$ bias scales only as the square root of $\log (2d)$; that is, the bias is nearly independent of the dimension. The key in the proof for Example \ref{intro-product-measure-example} is the one-step contraction property in the $\ell^\infty$ norm, which relies on the structure of the product measures. This property does not hold for general $\pi$. On the other hand, we know that for Gaussian distributions, we have an explicit formula for the law of iterates in the algorithm and thus the biased distribution $\pi_h$, so we can investigate the $W_{2,\ell^{\infty}}$ bias directly without concerning the one-step iteration property; see the following Example \ref{intro-gaussian-example} with proof in Appendix \ref{appendix-proof-Gaussian-case}.
\begin{example}
    \label{intro-gaussian-example}
    Consider $\pi \propto \exp(-V)$ and $V(x) = \frac{1}{2}(x-m)^T\Sigma^{-1}(x-m)$ where $m \in \bR^d$ and $\alpha I\preceq \Sigma^{-1} \preceq \beta I$.  Then, for $h \leq 1/\beta$, it holds that $$W_{2,\ell^{\infty}}(\pi_h,\pi) = O( h\sqrt{\beta\log (2d)}) = O(\sqrt{h\log (2d)})\, .$$
\end{example}

Again, Example \ref{intro-gaussian-example} shows that the $W_{2,\ell^{\infty}}$ bias is nearly independent of the dimension. Using the property of the $W_{2,\ell^{\infty}}$ metric, this further implies that the $W_2$ distance between $K$-marginals of $\pi_h$ and $\pi$ is bounded by $O(\sqrt{Kh\log (2d)})$, nearly independent of $d$. We can interpret this as the overall bias being nearly \textit{delocalized}
accross all one-dimensional marginals. A step size of $O(1/K)$, up to logarithmic terms, suffices for a bounded bias and error in $K$-marginals for any dimension; the iteration complexity scales with $K$ and is also nearly independent of $d$.

On the other hand, a simple example shows that delocalization of bias does not always hold:
\begin{example}
\label{intro-negative-example}
    Consider $\pi = \rho^{\otimes d}$ where $\rho$ is a one-dimensional centered log concave distribution for which the biased distribution $\rho_h$ obtained by the corresponding one dimensional unadjusted Langevin has nonzero mean\footnote{For example, Proposition \ref{prop-formula} can be used to show that $\rho$ can be taken to be a univariate mixture of Gaussians $p\cN(\mu_1, 1) + (1-p) \cN(\mu_2, 1)$ with $p \mu_1 + (1-p)\mu_2 = 0$, $0 < p < 1/2$, and $|\mu_1-\mu_2|$ sufficiently small.}, so that their mean differs by $\delta > 0$. Consider the observable $f(x) = \frac{1}{\sqrt{d}}\sum_{i=1}^d x^{(i)}$ for $x \in \bR^d$. It holds that
    \[ \left|\int f(\pi-\pi_h)\right| = \sqrt{d}\delta\, . \]
    Now, consider the rotation matrix $Q$ which satisfies $(Qx)^{(1)} = \frac{1}{\sqrt{d}}\sum_{i=1}^d x^{(i)}$. Let $\tilde{\pi} = Q\# \pi$. We have $\tilde{\pi}_h = Q\# \pi_h$. Consequently, it holds that
    \[ \left|\int x^{(1)} (\tilde{\pi} - \tilde{\pi}_h)\right| =  \left|\int f(\pi-\pi_h)\right| = \sqrt{d}\delta \, .\]
    We then find $W_{2,\ell^{\infty}}(\tilde{\pi}, \tilde{\pi}_h) \geq W_{1,\ell^{\infty}}(\tilde{\pi}, \tilde{\pi}_h) \geq 
    \left|\int x^{(1)} (\tilde{\pi} - \tilde{\pi}_h)\right| = \sqrt{d}\delta$. 
\end{example}
Example \ref{intro-negative-example} shows that for the rotated product measure $\tilde{\pi}$, the $W_{2,\ell^{\infty}}$ bias is of order $\sqrt{d}$. This indicates that the bias is not \textit{delocalized}, but \textit{concentrated} on one specific dimension. Thus, the \textit{delocalization effect} of the bias over marginals is a delicate phenomenon and does not hold universally. We note that  Example \ref{intro-negative-example} is characterized by \textit{strong, dense} interactions between coordinates after the rotation.

\edit{We note that in Example~\ref{intro-negative-example}, the delocalization effect holds for the invariant distribution $\pi$ but not for the rotated $\tilde{\pi}$, revealing that the delocalization phenomenon is coordinate dependent. The coordinate-dependent $W_{2,\ell^\infty}$ metric enables the characterization of this phenomenon.}
\subsubsection{Strongly log-concave target distributions with sparse interactions} In this paper, our main result is to show that the delocalization effect holds for distributions with sparse interactions; see the following Figure \ref{fig:sparse-potential} and Theorem \ref{thm-sparse-informal}.
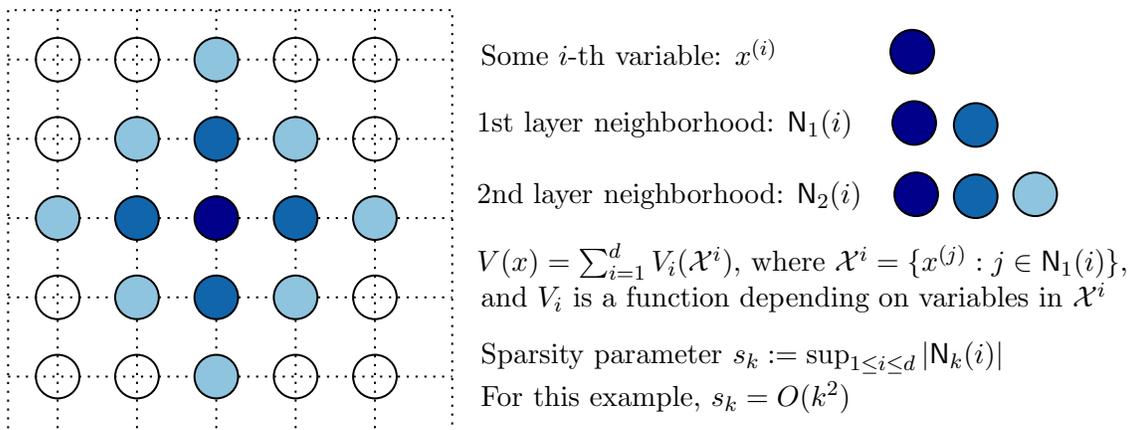
\begin{figure}[h]
    \centering
\tikzset{every picture/.style={line width=0.75pt}} 

\begin{tikzpicture}[x=0.75pt,y=0.75pt,yscale=-1,xscale=1]

\draw  [draw opacity=0][dash pattern={on 0.84pt off 2.51pt}] (37,14) -- (261,14) -- (261,227) -- (37,227) -- cycle ; \draw  [dash pattern={on 0.84pt off 2.51pt}] (62,14) -- (62,227)(102,14) -- (102,227)(142,14) -- (142,227)(182,14) -- (182,227)(222,14) -- (222,227) ; \draw  [dash pattern={on 0.84pt off 2.51pt}] (37,39) -- (261,39)(37,79) -- (261,79)(37,119) -- (261,119)(37,159) -- (261,159)(37,199) -- (261,199) ; \draw  [dash pattern={on 0.84pt off 2.51pt}]  ;
\draw  [fill={rgb, 255:red, 16; green, 101; blue, 171 }  ,fill opacity=1 ] (91,119) .. controls (91,112.92) and (95.92,108) .. (102,108) .. controls (108.08,108) and (113,112.92) .. (113,119) .. controls (113,125.08) and (108.08,130) .. (102,130) .. controls (95.92,130) and (91,125.08) .. (91,119) -- cycle ;
\draw   (91,39) .. controls (91,32.92) and (95.92,28) .. (102,28) .. controls (108.08,28) and (113,32.92) .. (113,39) .. controls (113,45.08) and (108.08,50) .. (102,50) .. controls (95.92,50) and (91,45.08) .. (91,39) -- cycle ;
\draw  [fill={rgb, 255:red, 16; green, 101; blue, 171 }  ,fill opacity=1 ] (131,79) .. controls (131,72.92) and (135.92,68) .. (142,68) .. controls (148.08,68) and (153,72.92) .. (153,79) .. controls (153,85.08) and (148.08,90) .. (142,90) .. controls (135.92,90) and (131,85.08) .. (131,79) -- cycle ;
\draw  [fill={rgb, 255:red, 142; green, 196; blue, 222 }  ,fill opacity=1 ] (91,79) .. controls (91,72.92) and (95.92,68) .. (102,68) .. controls (108.08,68) and (113,72.92) .. (113,79) .. controls (113,85.08) and (108.08,90) .. (102,90) .. controls (95.92,90) and (91,85.08) .. (91,79) -- cycle ;
\draw  [fill={rgb, 255:red, 142; green, 196; blue, 222 }  ,fill opacity=1 ] (51,119) .. controls (51,112.92) and (55.92,108) .. (62,108) .. controls (68.08,108) and (73,112.92) .. (73,119) .. controls (73,125.08) and (68.08,130) .. (62,130) .. controls (55.92,130) and (51,125.08) .. (51,119) -- cycle ;
\draw   (51,79) .. controls (51,72.92) and (55.92,68) .. (62,68) .. controls (68.08,68) and (73,72.92) .. (73,79) .. controls (73,85.08) and (68.08,90) .. (62,90) .. controls (55.92,90) and (51,85.08) .. (51,79) -- cycle ;
\draw   (51,39) .. controls (51,32.92) and (55.92,28) .. (62,28) .. controls (68.08,28) and (73,32.92) .. (73,39) .. controls (73,45.08) and (68.08,50) .. (62,50) .. controls (55.92,50) and (51,45.08) .. (51,39) -- cycle ;
\draw   (211,39) .. controls (211,32.92) and (215.92,28) .. (222,28) .. controls (228.08,28) and (233,32.92) .. (233,39) .. controls (233,45.08) and (228.08,50) .. (222,50) .. controls (215.92,50) and (211,45.08) .. (211,39) -- cycle ;
\draw   (171,39) .. controls (171,32.92) and (175.92,28) .. (182,28) .. controls (188.08,28) and (193,32.92) .. (193,39) .. controls (193,45.08) and (188.08,50) .. (182,50) .. controls (175.92,50) and (171,45.08) .. (171,39) -- cycle ;
\draw  [fill={rgb, 255:red, 142; green, 196; blue, 222 }  ,fill opacity=1 ] (131,39) .. controls (131,32.92) and (135.92,28) .. (142,28) .. controls (148.08,28) and (153,32.92) .. (153,39) .. controls (153,45.08) and (148.08,50) .. (142,50) .. controls (135.92,50) and (131,45.08) .. (131,39) -- cycle ;
\draw   (211,79) .. controls (211,72.92) and (215.92,68) .. (222,68) .. controls (228.08,68) and (233,72.92) .. (233,79) .. controls (233,85.08) and (228.08,90) .. (222,90) .. controls (215.92,90) and (211,85.08) .. (211,79) -- cycle ;
\draw  [fill={rgb, 255:red, 142; green, 196; blue, 222 }  ,fill opacity=1 ] (211,119) .. controls (211,112.92) and (215.92,108) .. (222,108) .. controls (228.08,108) and (233,112.92) .. (233,119) .. controls (233,125.08) and (228.08,130) .. (222,130) .. controls (215.92,130) and (211,125.08) .. (211,119) -- cycle ;
\draw   (211,159) .. controls (211,152.92) and (215.92,148) .. (222,148) .. controls (228.08,148) and (233,152.92) .. (233,159) .. controls (233,165.08) and (228.08,170) .. (222,170) .. controls (215.92,170) and (211,165.08) .. (211,159) -- cycle ;
\draw  [fill={rgb, 255:red, 142; green, 196; blue, 222 }  ,fill opacity=1 ] (171,159) .. controls (171,152.92) and (175.92,148) .. (182,148) .. controls (188.08,148) and (193,152.92) .. (193,159) .. controls (193,165.08) and (188.08,170) .. (182,170) .. controls (175.92,170) and (171,165.08) .. (171,159) -- cycle ;
\draw  [fill={rgb, 255:red, 16; green, 101; blue, 171 }  ,fill opacity=1 ] (131,159) .. controls (131,152.92) and (135.92,148) .. (142,148) .. controls (148.08,148) and (153,152.92) .. (153,159) .. controls (153,165.08) and (148.08,170) .. (142,170) .. controls (135.92,170) and (131,165.08) .. (131,159) -- cycle ;
\draw  [fill={rgb, 255:red, 142; green, 196; blue, 222 }  ,fill opacity=1 ] (91,159) .. controls (91,152.92) and (95.92,148) .. (102,148) .. controls (108.08,148) and (113,152.92) .. (113,159) .. controls (113,165.08) and (108.08,170) .. (102,170) .. controls (95.92,170) and (91,165.08) .. (91,159) -- cycle ;
\draw  [fill={rgb, 255:red, 16; green, 101; blue, 171 }  ,fill opacity=1 ] (171,119) .. controls (171,112.92) and (175.92,108) .. (182,108) .. controls (188.08,108) and (193,112.92) .. (193,119) .. controls (193,125.08) and (188.08,130) .. (182,130) .. controls (175.92,130) and (171,125.08) .. (171,119) -- cycle ;
\draw  [fill={rgb, 255:red, 0; green, 0; blue, 139 }  ,fill opacity=1 ] (131,119) .. controls (131,112.92) and (135.92,108) .. (142,108) .. controls (148.08,108) and (153,112.92) .. (153,119) .. controls (153,125.08) and (148.08,130) .. (142,130) .. controls (135.92,130) and (131,125.08) .. (131,119) -- cycle ;
\draw  [fill={rgb, 255:red, 142; green, 196; blue, 222 }  ,fill opacity=1 ] (171,79) .. controls (171,72.92) and (175.92,68) .. (182,68) .. controls (188.08,68) and (193,72.92) .. (193,79) .. controls (193,85.08) and (188.08,90) .. (182,90) .. controls (175.92,90) and (171,85.08) .. (171,79) -- cycle ;
\draw   (211,199) .. controls (211,192.92) and (215.92,188) .. (222,188) .. controls (228.08,188) and (233,192.92) .. (233,199) .. controls (233,205.08) and (228.08,210) .. (222,210) .. controls (215.92,210) and (211,205.08) .. (211,199) -- cycle ;
\draw   (171,199) .. controls (171,192.92) and (175.92,188) .. (182,188) .. controls (188.08,188) and (193,192.92) .. (193,199) .. controls (193,205.08) and (188.08,210) .. (182,210) .. controls (175.92,210) and (171,205.08) .. (171,199) -- cycle ;
\draw  [fill={rgb, 255:red, 142; green, 196; blue, 222 }  ,fill opacity=1 ] (131,199) .. controls (131,192.92) and (135.92,188) .. (142,188) .. controls (148.08,188) and (153,192.92) .. (153,199) .. controls (153,205.08) and (148.08,210) .. (142,210) .. controls (135.92,210) and (131,205.08) .. (131,199) -- cycle ;
\draw   (91,199) .. controls (91,192.92) and (95.92,188) .. (102,188) .. controls (108.08,188) and (113,192.92) .. (113,199) .. controls (113,205.08) and (108.08,210) .. (102,210) .. controls (95.92,210) and (91,205.08) .. (91,199) -- cycle ;
\draw   (51,199) .. controls (51,192.92) and (55.92,188) .. (62,188) .. controls (68.08,188) and (73,192.92) .. (73,199) .. controls (73,205.08) and (68.08,210) .. (62,210) .. controls (55.92,210) and (51,205.08) .. (51,199) -- cycle ;
\draw   (51,159) .. controls (51,152.92) and (55.92,148) .. (62,148) .. controls (68.08,148) and (73,152.92) .. (73,159) .. controls (73,165.08) and (68.08,170) .. (62,170) .. controls (55.92,170) and (51,165.08) .. (51,159) -- cycle ;
\draw  [fill={rgb, 255:red, 142; green, 196; blue, 222 }  ,fill opacity=1 ] (544,107) .. controls (544,100.92) and (548.92,96) .. (555,96) .. controls (561.08,96) and (566,100.92) .. (566,107) .. controls (566,113.08) and (561.08,118) .. (555,118) .. controls (548.92,118) and (544,113.08) .. (544,107) -- cycle ;
\draw  [fill={rgb, 255:red, 0; green, 0; blue, 139 }  ,fill opacity=1 ] (482,35) .. controls (482,28.92) and (486.92,24) .. (493,24) .. controls (499.08,24) and (504,28.92) .. (504,35) .. controls (504,41.08) and (499.08,46) .. (493,46) .. controls (486.92,46) and (482,41.08) .. (482,35) -- cycle ;
\draw  [fill={rgb, 255:red, 0; green, 0; blue, 139 }  ,fill opacity=1 ] (483,71) .. controls (483,64.92) and (487.92,60) .. (494,60) .. controls (500.08,60) and (505,64.92) .. (505,71) .. controls (505,77.08) and (500.08,82) .. (494,82) .. controls (487.92,82) and (483,77.08) .. (483,71) -- cycle ;
\draw  [fill={rgb, 255:red, 0; green, 0; blue, 139 }  ,fill opacity=1 ] (484,107) .. controls (484,100.92) and (488.92,96) .. (495,96) .. controls (501.08,96) and (506,100.92) .. (506,107) .. controls (506,113.08) and (501.08,118) .. (495,118) .. controls (488.92,118) and (484,113.08) .. (484,107) -- cycle ;
\draw  [fill={rgb, 255:red, 16; green, 101; blue, 171 }  ,fill opacity=1 ] (514,72) .. controls (514,65.92) and (518.92,61) .. (525,61) .. controls (531.08,61) and (536,65.92) .. (536,72) .. controls (536,78.08) and (531.08,83) .. (525,83) .. controls (518.92,83) and (514,78.08) .. (514,72) -- cycle ;
\draw  [fill={rgb, 255:red, 16; green, 101; blue, 171 }  ,fill opacity=1 ] (514,108) .. controls (514,101.92) and (518.92,97) .. (525,97) .. controls (531.08,97) and (536,101.92) .. (536,108) .. controls (536,114.08) and (531.08,119) .. (525,119) .. controls (518.92,119) and (514,114.08) .. (514,108) -- cycle ;

\draw (274,27) node [anchor=north west][inner sep=0.75pt]   [align=left] {Some $i$-th variable: $x^{(i)}$};
\draw (272,63) node [anchor=north west][inner sep=0.75pt]   [align=left] {1st layer neighborhood: $\sfN_1(i)$};
\draw (272,99) node [anchor=north west][inner sep=0.75pt]   [align=left] {2nd layer neighborhood: $\sfN_2(i)$};
\draw (272,130) node [anchor=north west][inner sep=0.75pt]   [align=left] {$V(x) = \sum_{i=1}^d V_i(\mathcal{X}^i)$, where $\mathcal{X}^i = \{x^{(j)}: j \in \mathsf{N}_1(i)\}$,};
\draw (274,150) node [anchor=north west][inner sep=0.75pt]   [align=left] {and $V_i$ is a function depending on variables in $\mathcal{X}^i$};
\draw (274,180) node [anchor=north west][inner sep=0.75pt]   [align=left] {Sparsity parameter $s_k := \sup_{1\leq i\leq d}|\sfN_{2k}(i)|$};

\draw (274,200) node [anchor=north west][inner sep=0.75pt]   [align=left] {For this example, $s_k = O(k^2)$};

\end{tikzpicture}
    \caption{Illustration of a potential $V(x),x\in \bR^d$ with sparse interactions.}
    \label{fig:sparse-potential}
\end{figure}

\begin{theorem}[informal]
\label{thm-sparse-informal}
    Let Assumption \ref{assumption-V-log-concave} hold. Suppose $V$ further satisfies the sparsity conditions illustrated in Figure \ref{fig:sparse-potential} and rigorously formulated in Section \ref{sec-Sparse-graphical-models} below with the sparsity parameter $s_k \leq C(k+1)^n$, then for $h\leq 1/\beta$,
    \[W_{2,\ell^{\infty}}(\pi_h,\pi) \leq \sqrt{h\log(2d)}\min\left\{\left(O\big(\frac{\beta}{\alpha}\log(2d)\big)\right)^{\frac{n}{2}+1},\  O\left(\frac{\beta}{\alpha}\sqrt{d}\right)\right\}\, .\]
\end{theorem}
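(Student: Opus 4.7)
The plan is to emulate the coupling argument from Example~\ref{intro-product-measure-example}---coupling the continuous-time Langevin dynamics $Y_t$ at stationarity with the discrete iterates $X_{kh}$ via a shared Brownian motion, introducing the auxiliary variable $\overline Y_{(k+1)h}$, and splitting the error via the triangle inequality into a ``contraction-like'' term $(a)$ and a one-step discretization term $(b)$---but to replace the one-step iteration by a $K$-step iteration in order to cope with the absence of a one-step contraction in the $\ell^\infty$ norm under non-product interactions. The one-step discretization term $(b)$ would be handled exactly as in Example~\ref{intro-product-measure-example}, yielding $O(\beta h^{3/2}\sqrt{\log(2d)})$ via It\^o's formula combined with a sub-Gaussian maximal inequality.

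The key new ingredient is a $K$-step $\ell^\infty$-contraction that exploits sparsity. I would linearize the gradient difference by the mean-value theorem, writing $\nabla V(X)-\nabla V(Y) = H_{X,Y}(X-Y)$ with $H_{X,Y}$ symmetric, satisfying $\alpha I \preceq H_{X,Y} \preceq \beta I$, and inheriting the sparsity pattern of $\nabla^2 V$. The propagation of the coupling discrepancy over $K$ synchronously coupled steps is then governed by a random matrix product $M_K = \prod_{j=0}^{K-1}(I - h H_{j})$, which has operator norm at most $e^{-\alpha Kh}$ and rows supported on $\mathsf{N}_K$. Converting from $\ell^2$ to $\ell^\infty$ via Cauchy--Schwarz gives $\|M_K v\|_\infty \leq \sqrt{s_K}\,e^{-\alpha Kh}\|v\|_\infty$, which is a genuine contraction once $K$ is chosen so that $\sqrt{s_K}\,e^{-\alpha K h}\leq 1/2$. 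With the polynomial growth $s_K \leq C(K+1)^n$ it would suffice to take $K = \Theta\bigl(\log(2d)/(\alpha h)\bigr)$, where the logarithmic factor also aligns with the sub-Gaussian maximum over $d$ coordinates.

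Combining this $K$-step contraction with the accumulated one-step discretization errors---each propagated through the remaining $K-k$ steps and hence weighted by $\sqrt{s_{K-k}}\, e^{-\alpha(K-k)h}$---I would obtain a recursion of the form
\[ W_{2,\ell^\infty}(\rho_{(m+K)h},\pi_h) \;\leq\; \tfrac{1}{2}\, W_{2,\ell^\infty}(\rho_{mh},\pi_h) \;+\; R, \]
where $R$ arises from summing the weighted one-step errors through the $W_{2,\ell^\infty}$ triangle inequality. Evaluating $R \lesssim \beta h^{3/2}\sqrt{\log(2d)}\sum_{m=0}^{K-1}\sqrt{s_m}\,e^{-\alpha m h}$ with $s_m \leq C(m+1)^n$ and $K\sim \log(2d)/(\alpha h)$ should yield $R$ of order $(\beta/\alpha\cdot\log(2d))^{n/2+1}\sqrt{h\log(2d)}$. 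Iterating the recursion to stationarity delivers the first branch of the minimum; the second branch follows from the trivial inequality $W_{2,\ell^\infty}\leq W_2$ combined with the classical $W_2$-bias bound $O\bigl((\beta/\alpha)\sqrt{dh}\bigr)$, and taking the better of the two gives the stated result.

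The main obstacle I anticipate is producing the correct quantitative trade-off in the $K$-step argument, so that the $W_2$-style contraction factor $e^{-\alpha Kh}$ and the locality-driven expansion factor $\sqrt{s_K}$ balance---after summation and iteration---to yield a bound depending only polylogarithmically on $d$. A secondary difficulty is verifying that the per-step $\ell^\infty$ noise is sub-Gaussian with per-coordinate variance $O(\beta^2 h^3)$ uniformly in the sparse interaction structure, which requires a careful analysis of the covariance of the one-step discretization noise using the fact that $\sum_j(\partial_i\partial_j V)^2\leq\|\nabla^2 V\|_{\mathrm{op}}^2\leq \beta^2$ regardless of the row sparsity $s_1$.
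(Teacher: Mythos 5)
Your overall strategy---multi-step synchronous coupling, splitting each step into a linearized propagation term and a one-step discretization error, and obtaining contraction only after enough steps that $\sqrt{s_\cdot}\,e^{-\alpha\cdot h}<1$---is exactly the route the paper takes (Proposition \ref{prop-bias-under-sparsity} and Theorem \ref{thm-OLD-bias-under-sparsity}). However, there is a concrete quantitative gap in the central step, and without it the bound you would obtain is too weak to prove the stated theorem.

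You bound the $K$-step propagator by $\|M_K v\|_\infty \le \sqrt{s_K}\,e^{-\alpha Kh}\|v\|_\infty$, using only that $M_K=\prod_{j<K}(I-hH_j)$ has row support in $\mathsf N_K$. That sparsity claim is true, but the bound is too crude: if you also weight the accumulated one-step errors by $\sqrt{s_m}\,e^{-\alpha m h}$ with $s_m\le C(m+1)^n$, then $\sum_{m\ge 0}(m+1)^{n/2}e^{-\alpha m h}\asymp(\alpha h)^{-(n/2+1)}$, and your remainder is
\[ R \;\lesssim\; \beta h^{3/2}\sqrt{\log(2d)}\cdot(\alpha h)^{-(n/2+1)} \;=\; \frac{\beta}{\alpha^{n/2+1}}\,h^{(1-n)/2}\sqrt{\log(2d)}\,,\]
which does not scale like $\sqrt h$ once $n\ge 1$ and in fact blows up as $h\to 0$ for $n\ge 2$. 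This cannot reproduce the theorem.

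The missing ingredient is the paper's sharper propagator estimate, Proposition \ref{prop-inf-bound-propagator}(ii)--(iii). The key observation is that although $M_K$ has support $\mathsf N_K$, its mass is concentrated on terms of degree $\lesssim Kh\beta$ in the expansion $M_K=\sum_k(-h)^k\sum A_{i_1}\cdots A_{i_k}$: the degree-$k$ contribution has operator norm at most $(Nh\beta)^k/k!\le(eNh\beta/k)^k$, so summing over $k>r$ with $r\ge e^2Kh\beta$ gives a tail of size $\exp(-r)$. One therefore gets $|M_K|_\infty\le\sqrt{s_r}\,e^{-\alpha Kh}+\sqrt{d}\,e^{-r}$, and choosing $r_K=\lceil e^2Kh\beta+\log\sqrt d\rceil$ yields $|M_K|_\infty\le 2\sqrt{s_{r_K}}\,e^{-\alpha Kh}$. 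The crucial point is that $r_K$ scales with the physical elapsed time $Kh\beta$ (plus $\log\sqrt d$), not with the step count $K$; with $K\sim\log d/(\alpha h)$ one has $r_K\sim(\beta/\alpha)\log d$, which is $h$-independent. Feeding the analogous weights $\sqrt{s_{r_m}}\,e^{-\alpha m h}$, $r_m\sim mh\beta+\log\sqrt d$, into the error sum gives $\sum_m q^{m-1}\sqrt{s_{r_m}}\lesssim \frac1{\alpha h}\bigl((\beta/\alpha)\log d\bigr)^{n/2}$, and multiplying by $\beta h^{3/2}\sqrt{\log(2d)}$ recovers the advertised $\sqrt{h\log(2d)}\bigl(O((\beta/\alpha)\log(2d))\bigr)^{n/2+1}$. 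Your proposal needs this truncation-of-the-propagator lemma; without it, replacing $s_m$ by the refined $s_{r_m}$ is exactly the step that makes the $h^{1/2}$ scaling survive for all $n$.

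Two smaller points: (a) the quantity $\mathbb E_\pi[|\nabla V|_\infty^2]$ must itself be controlled uniformly in $d$; the paper does this via the log-Sobolev inequality and Herbst's argument (Proposition \ref{thm-inf-bound}), giving $\sqrt{\mathbb E_\pi|\nabla V|_\infty^2}\lesssim(\beta/\sqrt\alpha)\sqrt{\log(2d)}$, which then requires the hypothesis $h\le\alpha/\beta^2$ (not merely $h\le 1/\beta$) to absorb the resulting term into $h^{3/2}\sqrt{\log(2d)}$. (b) Your secondary concern about the sub-Gaussianity of the one-step discretization noise is not where the difficulty lies; the Gaussian increment $\sqrt{2h}\xi$ is handled directly by the maximal inequality, while the drift part is controlled in $\ell^\infty$ using the row-wise sparsity of $\nabla^2 V$ (giving a $\sqrt{s_1}\beta$ factor) rather than any sub-Gaussian structure.
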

The precise statement of the theorem is in Section \ref{sec-Strongly Logconcave Distributions with Sparse Interactions}. The key technical challenge in extending the delocalization of bias result beyond product measures and Gaussian measures is the lack of one-step contraction in the $W_{2,\ell^\infty}$ metric and the lack of explicit formulas of the solutions. We employ a novel multi-step coupling argument to derive a multi-step contraction result and crucially use the sparsity of the potential to control the accumulated errors over these steps to prove the theorem. 

The theorem implies that for such a sparse potential, the $W_{2,\ell^{\infty}}$ bias is nearly independent of $d$, up to logarithmic terms and assuming the condition number $\beta/\alpha$ is also nearly independent of $d$. As a consequence, by taking $h \sim 1/K$, the iteration complexity scales with $K$ to ensure a desired $W_2$ error for all $K$-marginals, up to logarithmic terms on $d$. Thus, for these potentials, the unadjusted Langevin algorithm proves more scalable than its unbiased variants when the quantities of interest are low-dimensional marginals. We note that these sparse potential illustrated in Figure \ref{fig:sparse-potential} can arise in physical applications and Bayesian inference; see Section \ref{sec-example-graphical-models}. 

 Generalizing the delocalization of bias effect beyond sparse potentials can provide further insights on the behavior of the algorithm in high dimensions. We also approach this through an alternative asymptotic perspective, in the spirit of methods using Taylor's expansion and the Poisson equation \cite{talay1990second,talay1990expansion,bally1996law,mattingly2010convergence}. We derive explicit formulas for the bias of observables in first orders and demonstrate how the delocalization of bias, in the context of observables, can hold in a wider generality.
\subsection{Organization of this paper}
Section \ref{sec-Strongly Logconcave Distributions with Sparse Interactions} presents our main result on sparse potentials. In Section \ref{sec-Discussion and Generalizations}, we discuss generalizations of our result through asymptotic arguments. We conclude the paper in Section \ref{sec-conclusions}. All proofs are deferred to the appendices.
\subsection{Notations}
We write $A = O(B)$ or $A \lesssim B$ to mean that there exists a constant $C$ independent of $\alpha, \beta, h, d$ such that $A \leq CB$. On the other hand, $A = \Omega(B)$ or $A \gtrsim B$ means that there exists a constant $C$ independent of $\alpha, \beta, h, d$ such that $A \geq CB$. We use $|\cdot|_{\infty}$ and $|\cdot|_{2}$ to represent the $\ell^{\infty}$ and $\ell^2$ norms for vectors and matrices. When applied to matrices, they stand for the corresponding operator norms, as in Section \ref{sec-a-new-metric-measure-convergence}. We use $\preceq$ for the Loewner order such that if $M \preceq N$ where $M,N$ are symmetric matrices, then the matrix $N-M$ is  positive semi-definite.

\section{Strongly Log-concave Distributions with Sparse Interactions}
\label{sec-Strongly Logconcave Distributions with Sparse Interactions}
In this section, we begin by mathematically defining the potential with sparse interactions (as illustrated in Figure \ref{fig:sparse-potential}) in subsection \ref{sec-Sparse-graphical-models}. Subsection \ref{sec-Convergence-bound-sparse-model} presents the convergence bound. Finally, in subsection \ref{sec-example-graphical-models}, we discuss examples of sparse potentials that satisfy the assumptions of our theorem.
\subsection{Sparse graphical models}
\label{sec-Sparse-graphical-models}
Consider an undirected graph $G$ with $d$ nodes, labeled by $1\leq i \leq d$. If there is an edge between node $i$ and $j$, we write $i \sim j$, meaning that the two nodes are connected. We denote by $\sfN(i)$ the \textit{neighborhood} of the node $i$, which is a set of nodes connected to $i$. Here the neighborhood relationship is symmetric, and without loss of generality, we assume $i \sim i$ for all $1\leq i \leq d$. Furthermore, we define recursively that $\sfN_k(i) = \{1\leq j \leq d: \exists\ \ell \in \sfN_{k-1}(i), \text{ such that } j \sim \ell \}$, for $k \geq 2$. Here $\sfN_1(i) := \sfN(i)$. Denote the cardinality of the set $\sfN_k(i)$ by $|\sfN_k(i)|$ and \edit{let $\max_{1\leq i \leq d} |\sfN_{2k}(i)| = s_k$.} We also write $s = s_1$. 

\begin{assumption}
\label{assume-sparse-potential-function}
    Let the potential $V$ be of the form 
\begin{equation}
\label{eqn-sparse-potential-function}
    V(x) = \sum_{i=1}^d V_i (\mathcal{X}^i)
\end{equation}
where $\mathcal{X}^i = \{x^{(j)}: j \in \mathsf{N}(i)\}$ and $V_i$ is a function depending on variables in the set $\mathcal{X}^i$. 
\end{assumption}
We can also understand $G$ as a \textit{factor graph}, where $F =\{\mathcal{X}^i, 1\leq i \leq d\}$ represents the \textit{factor vertices}. By definition, the potential $V$ described above exhibits sparse interaction when the graph is sparse. \edit{We observe that, under Assumption \ref{assume-sparse-potential-function},
 the number of non-zero entries in each column of $\nabla^2 V(x)$ is bounded by $s_1$. More generally, when multiplying $k$ such Hessians, the number of non-zero entries per column is bounded by $s_k$, a fact that plays a key role in our subsequent analysis.}

\subsection{Convergence bound for unadjusted Langevin}
\label{sec-Convergence-bound-sparse-model}
Below we present the bias and convergence bounds in terms of the $W_{2,\ell^{\infty}}$ metric. The sparsity parameters $s_k$ will play important roles in our bound.
\begin{theorem}
\label{thm-OLD-bias-under-sparsity}
   Let Assumptions \ref{assumption-V-log-concave} and \ref{assume-sparse-potential-function} hold. Assume $0 \leq h \leq 1/\beta$ and denote $q = \exp(-h\alpha)$. Let $r_i = \lceil e^2 ih\beta + \log \sqrt{d}\rceil$ for $i \in \bN$. Then, for any $N \in \bN$ such that $2q^N\sqrt{s_{r_N}}<1$, we have the bound
    \begin{equation}
    \label{eqn-bound-general}
    \begin{aligned}
        W_{2,\ell^{\infty}}(\pi_h,\pi) &\leq  \mathsf{bias}(N, q, \beta, h)\\
        & := \frac{2\beta\sum_{i=1}^N q^{i-1}\sqrt{s_{r_i}}}{1-2q^N\sqrt{s_{r_N}}}\left(h^2\sqrt{\bE_\pi [|\nabla V|_{\infty}^2]} + 3h^{3/2}\sqrt{\log(2d)}\right)\, .
    \end{aligned}
    \end{equation}
    Moreover, the following convergence bound holds for any $k \in \bN$:
\begin{equation}W_{2,\ell^{\infty}}(\rho_{kh}, \pi) 
\leq q^k W_2(\rho_0, \pi_h) + \mathsf{bias}(N, q, \beta, h)\, .
\end{equation}
\end{theorem}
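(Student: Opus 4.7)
The plan is to generalize the coupling argument of Example \ref{intro-product-measure-example} by replacing the one-step $\ell^\infty$ contraction (which fails once $\nabla^2 V$ is non-diagonal) with an $N$-step contraction that exploits the sparsity pattern of $\nabla^2 V$. Couple a stationary continuous-time Langevin trajectory $Y_t$ and a discrete chain $X_{kh}$ started from an optimal $W_{2,\ell^\infty}$ coupling with $\pi_h$, using a shared Brownian motion. Introduce the auxiliary iterate $\overline Y_{(k+1)h}=Y_{kh}-h\nabla V(Y_{kh})+\sqrt 2\,(B_{(k+1)h}-B_{kh})$ so that, by the mean value theorem, $X_{(k+1)h}-\overline Y_{(k+1)h}=(I-hM_k)(X_{kh}-Y_{kh})$ where $M_k:=\int_0^1\nabla^2V\bigl((1-s)X_{kh}+sY_{kh}\bigr)\,\mathrm{d}s$ is symmetric, satisfies $\alpha I\preceq M_k\preceq\beta I$, and inherits the sparsity pattern of $\nabla^2 V$. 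Iterating over $N$ steps yields
\[
X_{Nh}-Y_{Nh}=\Pi_N(X_0-Y_0)+\sum_{k=0}^{N-1}\Pi_{N-k-1}\bigl(\overline Y_{(k+1)h}-Y_{(k+1)h}\bigr),\qquad \Pi_j:=\prod_{l=N-j}^{N-1}(I-hM_l).
\]

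The heart of the argument is to bound $|\Pi_j e|_\infty$ in $L^2$ for an arbitrary random vector $e$. Expand the product as $\Pi_j=\sum_{l=0}^{j}(-h)^l S_l$, where $S_l$ is a signed sum of $\binom{j}{l}$ ordered products of $l$ of the $M_\bullet$'s, each with operator norm $\le\beta^l$ and with support in the $l$-hop neighborhood $\sfN_l$. Truncating at level $r_j\approx e^2 jh\beta+\log\sqrt d$, Chernoff bounds on the binomial tail yield an operator-norm residual of $O(1/\sqrt d)$, which a union bound over $d$ coordinates absorbs into the factor of $2$ in $2q^N\sqrt{s_{r_N}}<1$. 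The retained low-degree part has $i$-th row supported on $\sfN_{r_j}(i)$, so by Cauchy--Schwarz,
\[
|(\Pi_j e)^{(i)}|=\bigl|\bigl\langle(\Pi_j^{\text{trunc}})^{\!\top}\delta_i,\,e\bigr\rangle\bigr|\le \bigl\|(\Pi_j^{\text{trunc}})^{\!\top}\delta_i\bigr\|_2\cdot\sqrt{s_{r_j}}\,|e|_\infty\le q^j\sqrt{s_{r_j}}\,|e|_\infty,
\]
using the $\ell^2$ contraction $\|\Pi_j\|_2\le q^j$ from strong convexity of $V$. This is the multi-step ``contraction'' that replaces the missing one-step contraction.

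For the per-step discretization error, apply Itô's formula to $\nabla V(Y_t)$: the drift piece contributes $O(h^2\sqrt{\bE_\pi|\nabla V|_\infty^2})$ under stationarity, while the Gaussian martingale contributes $O(h^{3/2}\sqrt{\log(2d)})$ per step via a maximal inequality over $d$ coordinates. Weighting each such error by the multi-step factor $\beta q^{i-1}\sqrt{s_{r_i}}$ coming from $\Pi_{N-k-1}$ (with $i=N-k$), and using that $\pi_h$ and $\pi$ are stationary for the discrete and continuous chains respectively, produces the self-bounding inequality
\[
W_{2,\ell^\infty}(\pi_h,\pi)\le 2q^N\sqrt{s_{r_N}}\,W_{2,\ell^\infty}(\pi_h,\pi)+2\beta\sum_{i=1}^N q^{i-1}\sqrt{s_{r_i}}\bigl(h^2\sqrt{\bE_\pi|\nabla V|_\infty^2}+3h^{3/2}\sqrt{\log(2d)}\bigr),
\]
which rearranges to \eqref{eqn-bound-general} under $2q^N\sqrt{s_{r_N}}<1$. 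The convergence bound then follows at once from the triangle inequality $W_{2,\ell^\infty}(\rho_{kh},\pi)\le W_2(\rho_{kh},\pi_h)+W_{2,\ell^\infty}(\pi_h,\pi)$ combined with the standard dimension-free contraction $W_2(\rho_{kh},\pi_h)\le q^k W_2(\rho_0,\pi_h)$ for the Euler--Maruyama chain.

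The main obstacle is the multi-step contraction itself: a naive $\ell^\infty\!\to\!\ell^\infty$ operator-norm bound on $\Pi_j$ throws away the strong-convexity contraction (which lives only in $\ell^2$), while a pure $\ell^2$ bound abandons the $\ell^\infty$ target norm. The $\sqrt{s_{r_j}}$ trade-off between the two forces the curious balance $r_j\approx e^2 jh\beta+\log\sqrt d$: the first term controls the Chernoff tail of $\binom{j}{l}(h\beta)^l$ for $l>r_j$, and the second supplies exactly the $1/\sqrt d$ slack absorbed by the union bound over $d$ coordinates.
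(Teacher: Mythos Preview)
Your proposal follows essentially the same route as the paper: the synchronous coupling unrolled over $N$ steps, the truncated polynomial expansion of $\Pi_j$ that trades the $\ell^2$ contraction $q^j$ against the support factor $\sqrt{s_{r_j}}$, the self-bounding inequality for $W_{2,\ell^\infty}(\pi_h,\pi)$, and the triangle inequality with the dimension-free $W_2$ contraction for the convergence statement.

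One step needs repair. You propose to control the per-step discretization error by applying It\^o's formula to $\nabla V(Y_t)$. Doing so literally produces a drift $\mathcal L\nabla V=-\nabla^2 V\,\nabla V+\nabla\Delta V$, whose second term involves third derivatives of $V$ and is not controlled under Assumption~\ref{assumption-V-log-concave}. The paper instead bounds $|Y_t-Y_{kh}|_\infty$ directly from the SDE $Y_t-Y_{kh}=-\int_{kh}^t\nabla V(Y_s)\,ds+\sqrt{2}(B_t-B_{kh})$ (stationarity handles the drift, the sub-Gaussian maximal inequality the Brownian increment), and then writes
\[
\overline Y_{(k+1)h}-Y_{(k+1)h}=\int_{kh}^{(k+1)h}\!\!\int_0^1\nabla^2 V\bigl(uY_t+(1-u)Y_{kh}\bigr)(Y_{kh}-Y_t)\,du\,dt.
\]
This extra $\nabla^2 V$ is precisely the source of the $\beta$ in your weighting factor $\beta q^{i-1}\sqrt{s_{r_i}}$: rather than bounding $|\Pi_{i-1}|_\infty$ and $\sqrt{\mathbb E|\overline Y-Y|_\infty^2}$ separately (which costs an additional $\sqrt{s_1}$), the paper absorbs it into the product to form a matrix of the type $\bigl(\int\nabla^2 V\bigr)(I-h\int\nabla^2 V)\cdots(I-h\int\nabla^2 V)$ and proves a dedicated $\ell^\infty$ bound $|J_i|_\infty\le 2\beta q^{i-1}\sqrt{s_{r_i}}$ for such products (Proposition~\ref{prop-inf-bound-propagator}(iii)). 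With this adjustment your argument matches the paper's.
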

\begin{proof}[Sketch of proof]
    Using the same coupling argument and notations in Example \ref{intro-product-measure-example}, we have
    \[\sqrt{\bE[|X_{(k+1)h} - Y_{(k+1)h}|_{\infty}^2]} \leq (a) + O( \beta h^{3/2}\sqrt{\log (2d)})\, ,\]
    where $(a)=\sqrt{\bE[|X_{kh} - Y_{kh} - h(\nabla V(X_{kh}) - \nabla V(Y_{kh}))|_{\infty}^2]}$. Now $\pi$ is not a product measure and we no longer have the contraction $(a) \leq (1-\alpha h) \sqrt{\bE[|X_{kh} - Y_{kh}|_{\infty}^2]}$. In fact, using Taylor's expansion, we can write
    \[X_{kh} - Y_{kh} - h(\nabla V(X_{kh}) - \nabla V(Y_{kh})) = H_k(X_{kh}-Y_{kh})\] where
    $H_k = I - h \int_0^1 \nabla^2 V (uX_{kh}+(1-u)Y_{kh}){\rm d}u$. Here $|H_k|_2 \leq 1-\alpha h$. When $\pi$ is a product measure, $H_k$ is a diagonal matrix and thus its $\ell^\infty$ norm equals $\ell^2$ norm so we can get the contraction property. However, in general,  $|H_k|_\infty$ can be much larger than $|H_k|_2$ so the one-step contraction fails. Specifically $|H_k|_\infty \leq \sqrt{s_1}|H_k|_2$ based on the fact that $H_k$ has only $s_1$ nonzero entries in each column. 
    As a result, we must employ a multi-step coupling argument to analyze the iterations. We now have 
\[\sqrt{\bE[|X_{(k+1)h} - Y_{(k+1)h}|_{\infty}^2]} \leq \sqrt{\bE[|H_k(X_{kh} - Y_{kh})|_{\infty}^2]} + \mathrm{error}(1)\, ,\]
where $\mathrm{error}(1) = O( \beta h^{3/2}\sqrt{\log (2d)})$.
Applying the bound again within the first term on the right hand size, yields
\[\sqrt{\bE[|X_{(k+1)h} - Y_{(k+1)h}|_{\infty}^2]}\leq \sqrt{\bE[|H_kH_{k-1}(X_{(k-1)h}-{Y}_{(k-1)h})|_{\infty}^2]} + \mathrm{error}(2)\, ,\]
where $\mathrm{error}(2)$ represents the two-step error. More generally, for any $k,N\in \bN$, we get
\[\sqrt{\bE[|X_{(k+N)h} - Y_{(k+N)h}|_{\infty}^2]}\leq \sqrt{\bE[|H_{k+N-1}H_{k+N-2}\cdots H_{k}(X_{kh}-{Y}_{kh})|_{\infty}^2]} + \mathrm{error}(N)\, .\]
Now by choosing a large $N$, we will get $|H_{k+N-1}H_{k+N-2}\cdots H_{k}|_{\infty} < 1$. In fact, a simple bound is \[|H_{k+N-1}H_{k+N-2}\cdots H_{k}|_{\infty} \leq \sqrt{d}|H_{k+N-1}H_{k+N-2}\cdots H_{k}|^2_{2} \leq \sqrt{d}\exp(-N\alpha h)\, .\]
Thus $N \sim \frac{\log d}{h}$ will lead to a contraction. In our proof, we leverage the sparsity of the potential to produce a potentially tighter bound
$|H_{k+N-1}H_{k+N-2}\cdots H_{k}|_{\infty} \leq 2q^N \sqrt{s_{r_N}}$.

    The primary challenge lies in controlling the growth of  $\mathrm{error}(N)$. A na\"{\i}ve approach would lead to polynomial dependence on $d$ since the bound may grow exponentially fast in $N$
    without contraction.
    In contrast, our analysis hinges on a sparsity bound of the propagator of the unadjusted Langevin algorithm over multiple steps; see Appendix \ref{appendix-Sparsity-of-the-propagator}. This sparsity enables us to obtain tighter bounds on these $\ell^{\infty}$ errors across multiple iterations. In fact, we can see that in \eqref{eqn-bound-general}, $r_i$ scales with the physical time $ih$, which justifies that $s_{r_i}$ can characterize the sparsity of interactions after $i$ steps of the algorithm. Using the sparsity bound, we finally control the accumulated discretization errors in $N$ steps by \[\mathrm{error}(N) \leq 2\beta(\sum_{i=1}^N q^{i-1}\sqrt{s_{r_i}})\left(h^2\sqrt{\bE_\pi [|\nabla V|_{\infty}^2]} + 3h^{3/2}\sqrt{\log(2d)}\right)\, .\]
With the $N$-step contraction and the bound on the accumulated discretization errors, we will get
 \[W_{2,\ell^{\infty}}(\rho_{(k+N)h},\pi) \leq 2q^N\sqrt{s_{r_N}}W_{2,\ell^{\infty}}(\rho_{kh},\pi) + \mathrm{error}(N)\, , \]
for any $k, N \in \bN$. When $2q^N\sqrt{s_{r_N}}<1$, we can let $k \to \infty$ and 
get the final bound in \eqref{eqn-bound-general}. The complete proof for Theorem \ref{thm-OLD-bias-under-sparsity} can be found in Appendix \ref{appendix-proof-logconcave distributions with sparse interactions}.
\end{proof}

Overall, Theorem \ref{thm-OLD-bias-under-sparsity} implies that the bias depends on the sparsity growth, the expectation of $|\nabla V|_{\infty}^2$, and $\log (2d)$. In the following, we state a result that provides an upper bound on the expected $|\nabla V|_{\infty}^2$. 
\begin{proposition}
\label{thm-inf-bound}
    Let Assumptions \ref{assumption-V-log-concave} hold. Then, it holds that 
   \begin{equation}
   \label{eqn-expected-inf-V-bound}
         \sqrt{\bE_\pi[|\nabla V|_{\infty}^2]} \leq \edit{2\sqrt{\beta\log (2d)}\, .}
     \end{equation}
     
\end{proposition}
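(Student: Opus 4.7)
The plan is to exhibit each partial derivative $\partial_i V$ as a centered, $\beta$-Lipschitz function under $\pi$, then invoke the sub-Gaussian concentration that $\alpha$-strongly log-concave measures enjoy, and finally combine this with a union bound over the $d$ coordinates.

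Two structural facts underpin the argument. First, since $\nabla^2 V \preceq \beta I$, the gradient of $x \mapsto \partial_i V(x)$ is the $i$-th row of $\nabla^2 V$ and has Euclidean norm at most $\beta$, so each $\partial_i V$ is $\beta$-Lipschitz. Second, integration by parts (justified by the Gaussian-like decay of $e^{-V}$ that strong convexity supplies) gives $\bE_\pi[\partial_i V] = 0$. As a side benefit, the same integration by parts produces the second-moment identity $\bE_\pi[(\partial_i V)^2] = \bE_\pi[\partial_{ii} V] \leq \beta$, which I would keep in reserve for the small-$d$ regime.

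Because $\pi$ is $\alpha$-strongly log-concave, the Bakry-Emery criterion furnishes a log-Sobolev inequality with constant $1/\alpha$, and Herbst's argument then yields the Gaussian concentration bound $\bP(|f(X) - \bE_\pi f| \geq t) \leq 2\exp(-\alpha t^2/(2L^2))$ for any $L$-Lipschitz $f$ with $X \sim \pi$. Applied with $f = \partial_i V$ and combined with a union bound over $i = 1, \dots, d$, this gives $\bP(|\nabla V(X)|_\infty \geq t) \leq 2d \exp(-\alpha t^2/(2\beta^2))$. Integrating the tail via $\bE_\pi[|\nabla V|_\infty^2] = \int_0^\infty 2t\, \bP(|\nabla V|_\infty > t)\, dt$ and splitting at the threshold $t_0^2 = (2\beta^2/\alpha)\log(4d)$, which balances the trivial bound against the Gaussian tail, yields $\bE_\pi[|\nabla V|_\infty^2] \leq (2\beta^2/\alpha)(\log(4d) + 1)$, from which the claimed inequality follows after absorbing the lower-order additive constant into the $\log(2d)$ factor.

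The main obstacle is matching the advertised constant $2$ cleanly: the inequality $(2\beta^2/\alpha)(\log(4d) + 1) \leq 4(\beta^2/\alpha)\log(2d)$ holds comfortably once $d$ is moderately large but must be checked (or patched using the variance identity above) in the very small-$d$ regime. Sharpening further, if desired, would use a direct sub-Gaussian Orlicz-norm calculation for $\max_i |\partial_i V|$ rather than the crude union bound; either way, the dimensional dependence $\sqrt{\log(2d)}$ is automatic from the Gaussian tail.
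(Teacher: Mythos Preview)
Your proposal is correct and follows essentially the same route as the paper: $\partial_i V$ is $\beta$-Lipschitz and centered under $\pi$, Bakry--\'Emery plus Herbst make each $\partial_i V$ sub-Gaussian with variance proxy $\beta^2/\alpha$, and a maximal inequality over $d$ sub-Gaussians finishes. The one technical difference is in that last step: instead of integrating the tail (which, as you note, leaves a small-$d$ constant to patch), the paper works directly with the MGF of the squared variables, using $\bE[\exp(\lambda (Y^{(i)})^2)] \leq (1-2\lambda\sigma^2)^{-1/2}$ together with $\exp(\lambda\,\bE\max_i (Y^{(i)})^2)\leq \sum_i \bE\exp(\lambda (Y^{(i)})^2)$, and the choice $\lambda = 1/(4\sigma^2)$ gives $\bE[|Y|_\infty^2]\leq 4\sigma^2\log(2d)$ cleanly for all $d\geq 1$---this is exactly the ``direct sub-Gaussian Orlicz-norm calculation'' you allude to as the sharpening.
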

\begin{proof}[Sketch of Proof]
\edit{Under the assumption, it holds that $\nabla V$ is $\sqrt{\beta}$ sub-Gaussian under $\pi$ \cite[Theorem 1.2]{altschuler2023shifted}. The expected $\ell^\infty$ norm of a sub-Gaussian vector is bounded using standard results.}
     The complete proof can be found in Appendix \ref{sec-proof-convergence-polynomial-sparsity}. 
\end{proof}

 Now, we are ready to provide one instantiation of Theorem \ref{thm-OLD-bias-under-sparsity} and Proposition \ref{thm-inf-bound} for the case that the sparsity of the neighborhood of the graphical model grows at most polynomially. We need to carefully calculate an explicit bound on the terms in \eqref{eqn-bound-general}; the detailed proof is in Appendix \ref{appendix-proof-logconcave distributions with sparse interactions}. Section \ref{sec-example-graphical-models} will provide concrete examples illustrating this case.
\begin{theorem}
\label{prop-polynomial-growth-s}
    Let Assumptions \ref{assumption-V-log-concave} and \ref{assume-sparse-potential-function} hold with $s_k \leq C(k+1)^n$ for any $k \in \bN$, where $C>0$ is a generic constant and $n \in \bN$. Then for \edit{$h \leq 1/\beta$}, we have
    \begin{equation}
    \label{eqn-bound-bias-sparsity-example}
        W_{2,\ell^{\infty}}(\pi_{h},\pi) \leq \sqrt{h\log(2d)}\min\left\{\left(O\big(\frac{\beta}{\alpha}\log(2d)\big)\right)^{\frac{n}{2}+1},\  O\left(\frac{\beta}{\alpha}\sqrt{d}\right)\right\}\, .
    \end{equation}
    Moreover, it holds that
    \begin{equation}W_{2,\ell^{\infty}}(\rho_{kh}, \pi) 
\leq q^k W_2(\rho_0, \pi_h) + \sqrt{h\log(2d)}\min\left\{\left(O\big(\frac{\beta}{\alpha}\log(2d)\big)\right)^{\frac{n}{2}+1},\  O\left(\frac{\beta}{\alpha}\sqrt{d}\right)\right\}\, .
\end{equation}

\end{theorem}
In Theorem \ref{prop-polynomial-growth-s}, we express $\left(O\big(\frac{\beta}{\alpha}\log(2d)\big)\right)^{\frac{n}{2}+1}$ in this manner to ensure that the hidden constant is also independent of $n$. In the worst-case scenario, where there is no sparsity present at all, $n$ scales with $\log d$.

Theorem \ref{prop-polynomial-growth-s} implies that the $W_{2,\ell^{\infty}}$ bias is nearly independent of $d$, up to logarithmic terms, if we assume $n$ is independent of $d$ and the condition number $\beta/\alpha$ is also nearly independent of $d$. In such case, by taking $h \sim 1/K$, the iteration complexity scales with $K$ to ensure a bounded $W_2$ error for all $K$-marginals, up to logarithmic terms on $d$, per the discussion in Section \ref{sec-intro-main-result}.

\begin{newremark}
    The current upper bound in Theorem \ref{prop-polynomial-growth-s} involves $\sqrt{h}$. This is in alignment with the $W_2$ bound proved in the literature (under Assumption \ref{assumption-V-log-concave}) based on coupling arguments where such square root dependence manifests. The second part of the bound, $\sqrt{h\log(2d)} O\left(\frac{\beta}{\alpha}\sqrt{d}\right)$, is consistent, up to logarithmic terms in $d$, with the established result for the $W_2$ bound. In fact, using the $W_2$ bound, we have that $W_{2,\ell^{\infty}}(\pi_h,\pi) \leq W_2(\pi_h,\pi) = O(\frac{\beta}{\alpha}\sqrt{dh})$.

    \edit{It may appear that in \eqref{eqn-bound-bias-sparsity-example}, the first bound is always superior to the second in terms of dimensional scaling. However, the second bound exhibits better dependence on the condition number $\beta/\alpha$. When $n \sim \log d$, corresponding to the case where sparsity is absent, the second bound becomes preferable.  This analysis also suggests that the dependence of the first bound on the condition number may be further improved through more refined analysis. We note that for Gaussians, Example~\ref{intro-gaussian-example} shows that there is no dependence on condition numbers, which provides further evidence that improving the condition number dependence is possible.}
\end{newremark}

\subsection{Examples of sparse graphical models}
\label{sec-example-graphical-models}
In this subsection, we provide examples that satisfy Assumptions \ref{assumption-V-log-concave} and \ref{assume-sparse-potential-function}. For the bound in Theorem \ref{prop-polynomial-growth-s} to be non-trivial, the ratio $\beta/\alpha$ should not grow too rapidly with the dimension. The simplest case would be when $\alpha$ and $\beta$ are fixed constants, independent of the dimension.

\edit{We show that there exist non-trivial examples satisfying our assumptions. Specifically, for given constants $\alpha$ and $\beta$ with $\alpha \leq \beta$, there exists a potential function $V$ defined for any dimension $d \in \mathbb{N}$ such that $\alpha I \preceq \nabla^2 V(x) \preceq \beta I$ for all $x \in \mathbb{R}^d$, and $s_k$ grows at most polynomially in $k$, as required in Theorem \ref{prop-polynomial-growth-s}.
A natural class of such potentials arises from Laplacians of sparse graphs, as illustrated in the following example.}
\begin{example}
\label{example-sparse-potential}
   We provide an example of $V:\bR^d \to \bR$ for which $\alpha I \preceq \nabla^2 V(x) \preceq \beta I$ at any $x \in \bR^d$ with $\alpha, \beta$ independent of $d$, and $s_k = \min\{ 2k+1, d\}$.
   
   Consider the following matrix 
   \begin{equation*}
       \begin{bmatrix}
2+\lambda(x) & -1 & 0 & 0 & \cdots & 0 \\
-1 & 2+\lambda(x) & -1 & 0 & \cdots & 0 \\
0 & -1 & 2+\lambda(x) & -1 & \cdots & 0 \\
\vdots & \vdots & \vdots & \vdots & \ddots & \vdots \\
0 & 0 & \cdots & -1 & 2+\lambda(x) & -1 \\
0 & 0 & \cdots & 0 & -1 & 2+\lambda(x) \\
\end{bmatrix} \in \bR^{d\times d}\, .
   \end{equation*}
   This matrix can be seen as $-\Delta + \lambda(x)I$, where $\Delta$ is the discretization of the 1D Laplace operator with homogeneous Dirichlet boundary conditions. 
   This operator satisfies $0 \preceq -\Delta \preceq 4 I$.
   Therefore, if $V$ is a potential such that $\nabla^2 V(x)$ is equal to the above matrix and $\lambda$ is a bounded function satisfying $\min_{x \in \bR^d} \lambda(x) = \alpha > 0$, then we have $s_k = \max\{2k+1,d\}$ and $\alpha I \preceq \nabla^2 V(x) \preceq \beta I$ with $\beta = 4 + \max_{x\in \bR^d}\lambda(x)$, in any dimension $d$. Thus this model satisfies the assumption in Theorem \ref{prop-polynomial-growth-s} with $n=1$. 
\end{example}
More generally, consider a sparse, bounded-degree graph $G$ on the variables $x^{(1)}, \dots, x^{(d)}$ and potentials of the form $V(x) = \sum_{i=1}^d V_i(x^{(i)})  + \frac 12 \sum_{i, j: (i, j) \in E(G)} (x^{(i)} - x^{(j)})^2$, where $V_1, \dots, V_d$ are arbitrary smooth and strongly convex functions and $E(G)$ is the edge set of $G$.
The resulting probability measures are log-concave perturbations of a Gaussian free field.
Then $V$ will satisfy Assumptions \ref{assumption-V-log-concave} and \ref{assume-sparse-potential-function} with $\alpha$ and $\beta$ independent of the dimension, analogous to Example \ref{example-sparse-potential}.
An important special case is when $G$ is a subset of the lattice $\bZ^n$, in which case the sparsity parameter satisfies $s_k \leq C(k+1)^n.$

These examples can arise in physical systems where there are local interactions and a restoring force for each state which plays the role of $V_i(x^{(i)})$, or in Bayesian inverse problems where such $V_i(x^{(i)})$ terms can come from priors.

\section{Discussions and Generalization with Asymptotic Arguments}
\label{sec-Discussion and Generalizations}
In this section, we discuss generalizations of the results beyond log-concave distributions with sparse interactions. While our analysis requires strict sparsity, where most interactions are zero, we anticipate that the analysis can be extended to cases where all but a small number of interactions are weak. Moreover, our sparsity assumptions treat different coordinates equally, and it is of interest to consider heterogeneous sparse models where some coordinates have more, but potentially weaker, connections with others. The study of these scenarios is left for future endeavors.

Alternatively, we can approach the question from a different perspective, employing asymptotic analysis. More precisely, we adopt the Poisson argument and utilize the generator of the Markov process to derive an asymptotic formula for the bias of certain observables. By doing so, we can gain additional insights into the algorithm's bias.

Recall our definition $V(x) = -\log \pi(x)$. Let $\cL$ and $\cL_h$ be the generators of the continuous Langevin dynamics and the unadjusted Langevin algorithm, respectively. By definition, $\cL u = \nabla \log \pi \cdot \nabla u + \Delta u$ and
\[\cL_h u(x) = \frac{\bE[u(x+h \nabla \log \pi(x)+ \sqrt{2h}\xi)] - u(x)}{h}\, . \]
Below we calculate the first order formula for the bias of an observable $f$. Here, we use the notation $A=o(h)$ to mean that the term $A$ is a higher order term than $h$ as $h \to 0$.
\begin{proposition} Without loss of generality, assume $\int f \pi = 0$. \edit{Then, under  regularity assumptions provided in Appendix \ref{appendix-A Formula for the Asymptotic Bias}}, it holds that
\label{prop-formula}
    \begin{equation}
        \int f\pi - \int f\pi_h = \frac{1}{4}h \int (-2\Delta f + |\nabla \log \pi|^2_2f)\pi + o(h)\, .
    \end{equation}
    Moreover, we also have the following formula:
     \begin{equation}
        \int f\pi - \int f\pi_h = -\frac{1}{4}h \int (\Delta f + f\Delta \log \pi)\pi + o(h)\, .
    \end{equation}
\end{proposition}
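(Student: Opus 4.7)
I would prove this via the Poisson-equation (Talay--Tubaro) approach. Since $\int f\,\pi = 0$, let $\phi$ be a sufficiently regular solution of $\cL\phi = f$. The stationarity identity for $\pi_h$ under the Euler--Maruyama chain is $\int \cL_h\phi\,\pi_h = 0$, so the bias is extracted from an asymptotic expansion of $\cL_h$ in $h$.

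To compute this expansion, write $Z = h\nabla\log\pi(x) + \sqrt{2h}\,\xi$ with $\xi\sim\cN(0, I_d)$ and Taylor-expand $u(x+Z)$ to fourth order. The Gaussian moment identities (second moment $\delta_{ij}$, fourth moment $\delta_{ij}\delta_{kl}+\delta_{ik}\delta_{jl}+\delta_{il}\delta_{jk}$, and vanishing odd moments) give
$$\cL_h u = \cL u + h\,\cL^{(1)} u + o(h), \quad \cL^{(1)} u = \tfrac12(\nabla\log\pi)^\top\nabla^2 u\,(\nabla\log\pi) + \nabla\log\pi\cdot\nabla\Delta u + \tfrac12\Delta^2 u.$$
Substituting into $0 = \int \cL_h\phi\,\pi_h$ and using $\pi_h \to \pi$ (so that $\pi_h$ may be replaced by $\pi$ in the $O(h)$ remainder) produces
$$\int f\,\pi - \int f\,\pi_h = h\int \cL^{(1)}\phi\,\pi + o(h).$$

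The substantive step is eliminating $\phi$ from the right-hand side. The key is the invariance identity $\int \cL^2\phi\,\pi = \int \cL f\,\pi = 0$, which lets me replace $\cL^{(1)}$ by $\cL^{(1)} - \tfrac12\cL^2$ in the integrand. Setting $V := -\log\pi$, an algebraic expansion of $\cL^2$ (using $\cL u = -\nabla V\cdot\nabla u + \Delta u$) and comparison with $\cL^{(1)}$ yields the clean identity
$$\bigl(\cL^{(1)} - \tfrac12\cL^2\bigr)u = \nabla^2 V : \nabla^2 u - \tfrac14\nabla|\nabla V|_2^2\cdot\nabla u + \tfrac12\nabla\Delta V\cdot\nabla u.$$
Two integrations by parts against $\pi = e^{-V}$ give $\int \nabla^2 V : \nabla^2\phi\,\pi = \tfrac12\int\nabla\phi\cdot\nabla|\nabla V|_2^2\,\pi - \int\nabla\phi\cdot\nabla\Delta V\,\pi$, after which the three terms collapse into $\int\nabla\phi\cdot\nabla g\,\pi$ with $g := \tfrac14|\nabla V|_2^2 - \tfrac12\Delta V$. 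The Dirichlet-form identity $\int\nabla\phi\cdot\nabla g\,\pi = -\int(\cL\phi)g\,\pi = -\int fg\,\pi$ then delivers
$$\int f\,\pi - \int f\,\pi_h = \tfrac{h}{4}\int f\,(2\Delta V - |\nabla V|_2^2)\,\pi + o(h).$$
Both stated formulas then follow by rewriting $\Delta V = -\Delta\log\pi$ and using the IBP identity $\int \Delta f\,\pi = \int f\,(|\nabla V|_2^2 - \Delta V)\,\pi$, obtained by moving $\Delta$ onto $\pi = e^{-V}$.

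The principal obstacle is the bookkeeping: extracting $\cL^{(1)}$ requires tracking all Gaussian moments up to order four in the Taylor expansion of $\cL_h$, and verifying the algebraic cancellations that convert $\cL^{(1)} - \tfrac12\cL^2$ into a ``divergence-form'' expression whose integral collapses to $\int fg\,\pi$ after a single application of the Dirichlet identity. A secondary issue is justifying that the Taylor remainder integrated against $\pi_h$ is genuinely $o(h)$; this is where the proposition's hypothesis that $f$ is ``sufficiently regular'' enters, giving $\phi$ (via elliptic regularity for $\cL$) enough smoothness and decay for all the integrations by parts to be legitimate.
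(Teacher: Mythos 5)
Your proof is correct, and while it shares the same high-level scaffolding as the paper's — Poisson equation $\cL u = f$, exploiting $\int \cL_h u\,\pi_h = 0$, and a fourth-order Taylor expansion of $\cL_h$ yielding the same first-order correction operator $\cL^{(1)}$ — the algebraic reduction that eliminates $u$ (your $\phi$) is genuinely different. The paper works directly with $\int \cL^{(1)} u\,\pi$: it first uses $\int\pi\,\nabla\Delta u\cdot\nabla\log\pi = -\int\pi\,\Delta^2 u$ to combine two of the three terms, then substitutes $\nabla\Delta u = \nabla f - (\nabla^2\log\pi)\nabla u - (\nabla^2 u)\nabla\log\pi$ coming from differentiating $\cL u = f$, and finally handles the residual term $\int\pi(\nabla\log\pi)^T(\nabla^2\log\pi)\nabla u$ via the identity $\cL^*(g\pi) = \nabla\cdot(\pi\nabla g)$ with $g = \tfrac12|\nabla\log\pi|^2$. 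You instead invoke the Talay--Tubaro trick of subtracting the ``free'' term $\tfrac12\cL^2$ (whose $\pi$-integral vanishes), verify the clean algebraic identity for $\cL^{(1)} - \tfrac12\cL^2$, and reduce everything to a single Dirichlet-form contraction $\int\nabla\phi\cdot\nabla g\,\pi = -\int f g\,\pi$ with $g = \tfrac14|\nabla V|_2^2 - \tfrac12\Delta V$. Your route is more systematic and requires knowing the $\cL^2$ shift in advance; the paper's is more hands-on but relies on spotting the $\cL^*(g\pi)$ factorization. I verified that both your intermediate identity for $\cL^{(1)} - \tfrac12\cL^2$ and the double IBP on $\int\nabla^2 V:\nabla^2\phi\,\pi$ are correct, and that your final expression $\tfrac{h}{4}\int f(2\Delta V - |\nabla V|_2^2)\pi$ matches both of the paper's stated formulas after the $\int\Delta f\,\pi = \int f(|\nabla V|_2^2 - \Delta V)\pi$ conversion.
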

The proof of the formula can be found in Appendix \ref{appendix-A Formula for the Asymptotic Bias}.

Let us make some observations regarding Proposition \ref{prop-formula}. If $\pi$ is Gaussian, then $\int f (\Delta \log \pi) \pi = 0$ since $\int f\pi = 0$. We have
\[\int f\pi - \int f\pi_h = -\frac{1}{4}h\int (\Delta f) \pi + o(h)\, .\]
This means that if $f$ is a linear observable, then the first-order term of the bias is zero. Moreover, if $f$ depends only on a small number of coordinates of $x \in \bR^d$, then the integral $\int (\Delta f) \pi$ will scale with that number, rather than $d$, because the entire integral will rely only on the marginal distribution of $\pi$ at these coordinates. Thus, to leading order, the bias of the observable is delocalized across dimensions.

In fact, the above argument can be further generalized to $\pi$ that is a perturbation of Gaussians; see the following Proposition \ref{prop-gaussian-perturbation}, and its proof in Appendix \ref{appendix-A Formula for the Asymptotic Bias}.
\begin{proposition}
\label{prop-gaussian-perturbation}
    Let $\pi \propto \exp(-V(x)) \propto \cN(x;m,\Sigma)\exp(-U(x))$ where $\cN(x;m,\Sigma)$ is the density of a Gaussian with mean $m$ and covariance $\Sigma$. Assume $\pi$ is centered, and 
    \edit{$\int (\Delta U)^2 \pi - (\int \Delta U \pi)^2 \leq C_0$} for some constant $C_0$ independent of $d$. Suppose $f$ depends only on $K$ coordinates of $x \in \bR^d$, then the first-order term of the bias depends on $K$, not $d$. In particular, for the observable $f(x) = \sum_{k=1}^K x^{(k)}$, under the additional assumption that $\pi$ satisfies a Poincar\'e inequality with a positive dimension-independent constant, we have
    \[\left|\int f\pi - \int f\pi_h\right| \leq C_1 \sqrt{K}h + o(h)\, , \]
    where $C_1$ is a constant independent of $d$.
\end{proposition}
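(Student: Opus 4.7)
The plan is to apply the second identity of Proposition \ref{prop-formula} directly, using the Gaussian perturbation structure of $\pi$ to simplify $\Delta \log \pi$, and then to estimate the residual terms via Cauchy--Schwarz together with the Poincar\'e inequality.

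First, I would compute $\Delta \log \pi$. Writing $\log \pi(x) = -\tfrac{1}{2}(x-m)^T\Sigma^{-1}(x-m) - U(x) + \text{const}$, I obtain $\Delta \log \pi = -\mathrm{tr}(\Sigma^{-1}) - \Delta U$. Substituting into Proposition \ref{prop-formula} and using the centering assumption $\int f\pi = 0$ (which eliminates the otherwise $d$-scaling term $\tfrac{1}{4}h\,\mathrm{tr}(\Sigma^{-1})\int f\pi$) yields
\begin{equation}
\int f\pi - \int f\pi_h = -\tfrac{1}{4}h \int (\Delta f)\pi + \tfrac{1}{4}h \int f(\Delta U)\pi + o(h)\, .
\end{equation}

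Next, I would establish the general claim that the first-order term depends on $K$ rather than $d$. Since $f$ depends only on $K$ coordinates, so does $\Delta f$, hence $\int (\Delta f)\pi$ is determined by the $K$-marginal of $\pi$ and does not scale with $d$. For the remaining term, the centering of $f$ allows me to rewrite $\int f(\Delta U)\pi = \int f\bigl(\Delta U - \int \Delta U\,\pi\bigr)\pi$, and Cauchy--Schwarz together with the hypothesis $\int (\Delta U)^2\pi \leq C_0$ gives
\begin{equation}
\left|\int f(\Delta U)\pi\right| \leq \sqrt{\int f^2\pi}\,\sqrt{\mathrm{Var}_\pi(\Delta U)} \leq \sqrt{C_0 \int f^2\pi}\, .
\end{equation}
Again $\int f^2\pi$ depends only on the $K$-marginal, proving the general claim.

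Finally, to obtain the sharp $\sqrt{K}h$ bound for the specific observable $f(x) = \sum_{k=1}^K x^{(k)}$, I would use $\Delta f = 0$ so only the second term contributes, and then invoke the Poincar\'e inequality with dimension-independent constant $C_P$ to conclude $\int f^2\pi = \mathrm{Var}_\pi(f) \leq C_P \int |\nabla f|^2\pi = C_P K$, since $|\nabla f|^2 \equiv K$. Chaining the bounds gives
\begin{equation}
\left|\int f\pi - \int f\pi_h\right| \leq \tfrac{1}{4}h\sqrt{C_0 C_P K} + o(h)\, ,
\end{equation}
which is the desired inequality with $C_1 = \tfrac{1}{4}\sqrt{C_0 C_P}$, a constant independent of $d$.

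The main obstacle is really conceptual: isolating and discarding the $\mathrm{tr}(\Sigma^{-1})$ contribution to $\Delta\log\pi$, which would otherwise grow linearly with $d$ and ruin any hope of a $d$-independent bound. Once the centering hypothesis is used to kill this term, the remaining estimates are routine Cauchy--Schwarz plus a Poincar\'e-type variance bound whose dimension-independence is ensured by the assumption that $|\nabla f|^2 = K$ and by the dimension-independent Poincar\'e constant.
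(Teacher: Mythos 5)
Your proof is correct and follows essentially the same route as the paper's: expand $\Delta\log\pi = -\mathrm{tr}(\Sigma^{-1}) - \Delta U$, use centering to eliminate the $\mathrm{tr}(\Sigma^{-1})$ contribution, apply Cauchy--Schwarz against $\int(\Delta U)^2\pi \leq C_0$, and invoke the Poincar\'e inequality with $\Delta f = 0$ and $|\nabla f|^2 = K$ for the specific linear observable. The only cosmetic difference is that you center $\Delta U$ before applying Cauchy--Schwarz (a harmless refinement the paper omits); otherwise the arguments coincide step for step.
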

Proposition \ref{prop-gaussian-perturbation} shows that the delocalization of bias for observables holds if $\pi$ is a perturbation of a Gaussian distribution, suggesting that the  delocalization effect may hold in significant generality. \edit{Note, however, the asymptotic arguments in this section do not provide insight into the behavior of the higher order contributions to the bias.} On the other hand, the asymptotic formula is applicable beyond log-concave distributions and can be applied to study the bias of any observable of interest.

Proposition \ref{prop-gaussian-perturbation} also suggests that the Gaussian part may not have a significant effect on the bias of the observable $f$. Recall that our $W_{2,\ell^{\infty}}$ bounds apply to Gaussian distributions (which can have dense interactions) and log-concave distributions with sparse interactions. It is natural to inquire whether our $W_{2,\ell^{\infty}}$ analysis can be extended to the case of a distribution that is the product of a dense Gaussian and a sparse log-concave distribution.

\section{Conclusions}
\label{sec-conclusions}
In this article, we studied the convergence of unadjusted Langevin algorithms in high dimensions. For strongly log-concave distributions, existing results showed that the iteration complexity scales proportionally to $d$ or $\sqrt{d}$ in order to achieve a desired error in the $W_2$ metric for any dimension $d$. We demonstrate that for Gaussian distributions or distributions with certain sparsity structures, a constant number of iterations, up to some logarithmic terms in $d$, suffices to achieve a bounded error in the $W_{2, \ell^{\infty}}$ metric. Consequently, a number of iterations proportional to $K$ (up to some $\log d$ terms) can achieve a bounded $W_2$ error for all $K$-marginals. This result implies that even in extremely high-dimensional settings, unadjusted Langevin algorithms can still be highly scalable if the quantities of interest depend only on low-dimensional marginals. We note that this desirable property is not satisfied for unbiased schemes such as the MALA or proximal samplers, where the constraint on the step size necessitates poor scaling in $d$.

The delocalization of bias effect is rigorously shown in this paper for Gaussian distributions and distributions with certain sparsity structures. 
Our error bounds have a favorable dependence on $d$; however, the dependence on the condition number $\beta/\alpha$ may potentially be improved. We also provide counterexamples of a rotated product measure for which the bias is not delocalized. 


\edit{Our work represents a first step in understanding the delocalization of bias phenomenon. Though our sparsity assumption is inspired by physical systems such as statistical physics models and molecular dynamics simulations, it does not fully address those problems. For example, extensions to systems with decaying but long-range interactions and non-logconcave densities should be considered.
In fact, our asymptotic arguments based on Poisson equations provide further insights into how the first-order terms of the bias of general observables scale with dimension.
}

More broadly, it is important to understand how the complexity of other unadjusted MCMC algorithms scales with dimension when the quantities of interest depend solely on low-dimensional marginals. Addressing this question can provide valuable insights to help practitioners select appropriate algorithms and understand their computational complexity when dealing with high-dimensional sampling problems.

\vspace{0.2in}
\noindent {\bf Acknowledgments}
We thank Nawaf Bou-Rabee and Aaron Dinner for helpful discussions. \edit{We thank Sinho Chewi for pointing us to \cite[Theorem 1.2]{altschuler2023shifted}.} Y. Chen is supported in part by the Office of Naval Research project under award N00014-22-1-2728 (PI: Benjamin  Peherstorfer) and Vannevar Bush award ``Mathematical Foundations and Scientific Applications of Machine Learning'' (PI: Eric Vanden-Eijnden). J. Weare and X. Cheng are supported in part by National Science Foundation awards DMS-2054306 and DMS-2425899.
J. Niles-Weed is supported in part by National Science Foundation awards DMS-2210583 and DMS-2339829.

\vspace{0.1in}

\bibliographystyle{plain}
\bibliography{references}
\appendix
\section{Proofs for Log-Concave Product Measures}
\label{appendix-proof-product-measure}
\begin{proposition}
Under Assumption \ref{assumption-V-log-concave}, let $V(x) = \sum_{i=1}^d V_i(x^{(i)})$ such that $\pi \propto \exp(-V)$ is a product measure. Assume $0 \leq h \leq 1/\beta$ and denote $q = \exp(-h\alpha)$. Consider the Langevin dynamics ${\rm d}Y_t = -\nabla V(Y_t){\rm d}t + \sqrt{2}{\rm d}B_t$ with $Y_0 \sim \pi$. Let $\rho_{kh}$ denote the probability distribution of $X_{kh}$ from the iterations of the unadjusted Langevin algorithm with stepsize $h$. Then, 
the following estimates hold for any $k \in \bN$:
    \begin{equation}
    \label{eqn-iterate-contraction}
    W_{2,\ell^{\infty}}(\rho_{(k+1)h},\pi) \leq q W_{2,\ell^{\infty}}(\rho_{kh},\pi) +  \beta\sqrt{(\frac{8\beta}{3}h^4+8h^3)\log(2d)}\, .
    \end{equation}
    This implies that 
    \[W_{2,\ell^{\infty}}(\pi_h,\pi) \leq \frac{4\beta}{\alpha}\sqrt{h\log(2d)}\, .\] 
    
\end{proposition}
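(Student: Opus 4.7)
The plan is to implement the coupling strategy already outlined in the sketch of Example \ref{intro-product-measure-example}, keeping careful track of constants so as to obtain the explicit discretization term $\beta\sqrt{(\tfrac{8\beta^2}{3\alpha}h^4+8h^3)\log(2d)}$. Concretely, I would run the continuous Langevin $Y_t$ from stationarity, driven by the same Brownian motion $B_t$ as the discrete chain $X_{kh}$, and optimally couple $(X_{kh}, Y_{kh})$ at time $kh$ so that $\sqrt{\bE[|X_{kh}-Y_{kh}|_\infty^2]} = W_{2,\ell^\infty}(\rho_{kh},\pi)$. Introducing the one-step Euler update from $Y_{kh}$,
\[
\overline Y_{(k+1)h} = Y_{kh} - h\nabla V(Y_{kh}) + \sqrt{2}(B_{(k+1)h}-B_{kh}),
\]
the triangle inequality splits the error into a contraction term $(a)=\sqrt{\bE[|X_{(k+1)h}-\overline Y_{(k+1)h}|_\infty^2]}$ and a one-step discretization term $(b)=\sqrt{\bE[|\overline Y_{(k+1)h}-Y_{(k+1)h}|_\infty^2]}$.

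For $(a)$, the product structure $V(x)=\sum_i V_i(x^{(i)})$ is crucial: the $i$-th coordinate of $X_{kh}-Y_{kh}-h(\nabla V(X_{kh})-\nabla V(Y_{kh}))$ depends only on the $i$-th coordinates of $X_{kh}$ and $Y_{kh}$. A mean value theorem applied to $V_i'$ together with $\alpha\le V_i''\le\beta$ and the hypothesis $h\le 1/\beta$ gives
\[
|x-y - h(V_i'(x)-V_i'(y))| \le (1-h\alpha)|x-y|,
\]
and taking the coordinate-wise maximum yields $(a)\le (1-h\alpha)\sqrt{\bE[|X_{kh}-Y_{kh}|_\infty^2]}\le q\,W_{2,\ell^\infty}(\rho_{kh},\pi)$. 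This is the ``one-step $\ell^\infty$ contraction'' that relies on $V$ being a product.

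For $(b)$, I would write $\overline Y_{(k+1)h}-Y_{(k+1)h}=\int_{kh}^{(k+1)h}(\nabla V(Y_s)-\nabla V(Y_{kh}))\,ds$, and use the product structure once again to obtain $|\nabla V(y)-\nabla V(y')|_\infty \le \beta|y-y'|_\infty$. Cauchy–Schwarz in time then reduces the task to estimating $\bE[|Y_s-Y_{kh}|_\infty^2]$ for $s\in[kh,(k+1)h]$. Splitting $Y_s - Y_{kh}$ into its drift and Brownian parts, the drift part is controlled by $\bE_\pi[|\nabla V|_\infty^2]\le 4\beta^2 \alpha^{-1}\log(2d)$ from Proposition \ref{thm-inf-bound} (Herbst/log-Sobolev), which produces the $\tfrac{8\beta^2}{3\alpha}h^4\log(2d)$ contribution after the two time integrations; the Brownian part is controlled by the standard Gaussian maximal inequality $\bE[|B_s-B_{kh}|_\infty^2]\lesssim (s-kh)\log(2d)$, giving the $h^3\log(2d)$ contribution. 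Combining gives $(b)\le \beta\sqrt{(\tfrac{8\beta^2}{3\alpha}h^4+8h^3)\log(2d)}$ and hence the one-step recursion \eqref{eqn-iterate-contraction}.

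Finally, iterating the recursion in $k$ gives the geometric series $\sum_{j\ge 0}q^j = 1/(1-q)\le 1/(h\alpha)$, which upon multiplying by the one-step error yields the claimed stationary bias bound $W_{2,\ell^\infty}(\pi_h,\pi)\le \tfrac{\beta}{\alpha}\sqrt{(\tfrac{8\beta^2}{3\alpha}h^2+8h)\log(2d)}$, with the simplified $\tfrac{4\beta}{\alpha}\sqrt{h\log(2d)}$ form following by absorbing the $\tfrac{8\beta^2}{3\alpha}h^2$ term into the $8h$ term under $h\le \alpha/\beta^2$. I expect the main technical obstacle to be the careful bookkeeping in step $(b)$: obtaining sharp constants for both the drift and noise contributions to $\bE[|Y_s-Y_{kh}|_\infty^2]$, and in particular justifying the use of Proposition \ref{thm-inf-bound} to control $\bE_\pi[|\nabla V|_\infty^2]$ at the right order in $\log(2d)$. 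The contraction step, by contrast, is essentially forced by the product structure.
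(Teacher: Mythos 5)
Your proposal follows the paper's proof essentially step for step: same synchronous coupling, same auxiliary one-step iterate $\overline Y_{(k+1)h}$, same decomposition into a coordinate-wise contraction term $(a)$ and a discretization term $(b)$, and the same splitting of $\bE[|Y_t-Y_{kh}|_\infty^2]$ into a drift contribution (controlled via the sub-Gaussian bound $\bE_\pi[|\nabla V|_\infty^2]\le 4\beta^2\alpha^{-1}\log(2d)$) and a Brownian contribution (controlled by the maximal inequality for Gaussians), which combine to give exactly $\beta\sqrt{(\tfrac{8\beta^2}{3\alpha}h^4+8h^3)\log(2d)}$ for the one-step error. The last step, however, contains a sign error: you claim $\sum_{j\ge0}q^j=\tfrac{1}{1-q}\le\tfrac{1}{h\alpha}$, but since $1-e^{-h\alpha}\le h\alpha$ the inequality goes the wrong way, i.e.\ $\tfrac{1}{1-q}\ge\tfrac{1}{h\alpha}$. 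The fix is immediate and is in fact already available from your own step $(a)$: iterate the recursion with the sharper contraction factor $1-h\alpha$ (which you derived before loosening it to $q$), and sum $\sum_{j\ge0}(1-h\alpha)^j=\tfrac{1}{h\alpha}$ exactly; the $q$-version of the recursion in the proposition statement then follows because $1-h\alpha\le q$. With that correction the argument is complete and reproduces the paper's bounds.
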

\begin{proof}
First, for the continuous-time Langevin dynamics $Y_t$ and the discrete-time iterates $X_{kh}$ in the unadjusted Langevin algorithm, we have:
   \begin{equation}
       \begin{aligned}
           Y_{(k+1)h} &= Y_{kh} - \int_{kh}^{(k+1)h} \nabla V(Y_t) {\rm d} t + \sqrt{2}(B_{(k+1)h} - B_{kh})\\
           X_{(k+1)h} &= X_{kh}-h\nabla V(X_{kh}) + \sqrt{2}(B_{(k+1)h} - B_{kh})\, .
       \end{aligned}
   \end{equation}
   We couple the two processes using the same Brownian motion. We aim to estimate $\bE[|X_{(k+1)h} - Y_{(k+1)h}|_{\infty}^2]$. For this purpose we introduce an auxiliary random variable
   \begin{equation}
       \overline{Y}_{(k+1)h} = Y_{kh} - h \nabla V(Y_{kh}) + \sqrt{2}(B_{(k+1)h} - B_{kh})\, .
   \end{equation}
   Using the triangle inequality then leads to 
   \begin{equation}
       \sqrt{\bE[|X_{(k+1)h} - Y_{(k+1)h}|_{\infty}^2]} \leq \underbrace{\sqrt{\bE[|X_{(k+1)h} - \overline{Y}_{(k+1)h}|_{\infty}^2]}}_{(a)} + \underbrace{\sqrt{\bE[| \overline{Y}_{(k+1)h} - Y_{(k+1)h}|_{\infty}^2]}}_{(b)}\, .
   \end{equation}
   For $(a)$, we have
   \begin{equation}
   \begin{aligned}
       (a) &= \sqrt{\bE[|X_{kh} - Y_{kh} - h(\nabla V(X_{kh}) - \nabla V(Y_{kh}))|_{\infty}^2]}\\
       & = \sqrt{\bE[\max_{1\leq i \leq d}|X_{kh}^{(i)} - Y_{kh}^{(i)} - h(\nabla V_i(X_{kh}^{(i)}) - \nabla V_i(Y_{kh}^{(i)}))|^2]} \\
       & \leq \sqrt{\bE[\max_{1\leq i \leq d} (1-h\alpha)^2|X_{kh}^{(i)} - Y_{kh}^{(i)}|^2]}\\
       & \leq q \sqrt{\bE[|X_{kh} - Y_{kh}|_{\infty}^2]}\, ,
   \end{aligned}
   \end{equation}
   where we used the facts that $V(x) = \sum_{i=1}^d V_i(x^{(i)})$ and $0\leq 1-h\beta \leq 1-h\alpha$. For $(b)$, we have 
   \begin{equation}
   \begin{aligned}
       \bE[| \overline{Y}_{(k+1)h} - Y_{(k+1)h}|_{\infty}^2] & = \bE[|\int_{kh}^{(k+1)h} \nabla V(Y_t) - \nabla V(Y_{kh}) {\rm d}t|_{\infty}^2] \\
       &\leq h \int_{kh}^{(k+1)h} \bE[|\nabla V(Y_t) - \nabla V(Y_{kh})|_{\infty}^2]{\rm d}t \\
       &\leq h\int_{kh}^{(k+1)h} \int_0^1 \bE[|\nabla^2 V(uY_t + (1-u)Y_{kh}) (Y_t -Y_{kh})|_{\infty}^2] {\rm d}u {\rm d} t\\
       & \leq h\beta^2 \int_{kh}^{(k+1)h} \bE[|Y_t - Y_{kh}|_{\infty}^2]{\rm d}t \, ,
   \end{aligned}
   \end{equation}
   where in the last inequality, we used the fact that $\nabla^2 V(uY_t + (1-u)Y_{kh})$ is a diagonal matrix with each diagonal entry bounded by $\beta$ in magnitude. We further have
    \begin{equation}
    \label{eqn-discretization-bound}
    \begin{aligned}
         &\int_{kh}^{(k+1)h} \bE[|Y_t - Y_{kh}|_{\infty}^2]{\rm d}t \\
         =& \int_{kh}^{(k+1)h} \bE[|\int_{kh}^{t} \nabla V(Y_s){\rm d}s + \sqrt{2}B_{t-kh}|_{\infty}^2]{\rm d}t\\
         \leq & \int_{kh}^{(k+1)h} \left(2\bE[|\int_{kh}^{t} \nabla V(Y_s){\rm d}s|_\infty^2] + 2\bE[|\sqrt{2}B_{t-kh}|_{\infty}^2] \right) {\rm d}t\\
         \leq & \int_{kh}^{(k+1)h} 2(t-kh)\int_{kh}^{t} \bE[|\nabla V(Y_s)|_\infty^2]{\rm d}s {\rm d}t + \int_{kh}^{(k+1)h} 2\bE[|\sqrt{2}B_{t-kh}|_{\infty}^2] {\rm d}t\\
        \leq &2\int_{kh}^{(k+1)h} (t-kh)^2\bE_{\pi}[|\nabla V(Y)|_{\infty}^2] {\rm d}t + \int_{kh}^{(k+1)h} 16(t-kh)\log(2d) {\rm d}t\\
         =& \frac{2}{3} h^3 \bE_\pi[|\nabla V(Y)|_{\infty}^2] + 8h^2\log (2d)\, ,
    \end{aligned}
    \end{equation}
    where we used the fact that all $Y_t \sim \pi$, and the bound $\bE|B_u|_{\infty}^2 \leq 4 u\log(2d)$ holds for any $u \geq 0$ due to Lemma \ref{lemma-squared-maximal-inequality-sub-Gaussian}. 
    
    \edit{Then by Proposition \ref{thm-inf-bound}, we have $\sqrt{\bE_\pi[|\nabla V|_{\infty}^2]} \leq 2 \sqrt{\beta\log (2d)}$. Combining all the bounds above, we get
    \[\sqrt{\bE[|X_{(k+1)h} - Y_{(k+1)h}|_{\infty}^2]} \leq q \sqrt{\bE[|X_{kh} - Y_{kh}|_{\infty}^2]} + \beta\sqrt{(\frac{8\beta}{3}h^4+8h^3)\log(2d)}\, .\]
    We can now couple the distribution of $X_{kh}$ and $Y_{kh}$ such that $\sqrt{\bE[|X_{kh} - Y_{kh}|_{\infty}^2]} = W_{2,\ell^{\infty}}(\rho_{kh},\pi)$. With this and using the definition of the $W_{2,\ell^{\infty}}$ norm, we get
    \[W_{2,\ell^{\infty}}(\rho_{(k+1)h},\pi) \leq qW_{2,\ell^{\infty}}(\rho_{kh},\pi) + \beta\sqrt{(\frac{8\beta}{3}h^4+8h^3)\log(2d)}\, . \]
    Iterating this inequality leads to the bound on the $W_{2,\ell^\infty}$ bias. Furthermore under the assumption $h\leq \frac{1}{\beta}$, we get
     \[W_{2,\ell^{\infty}}(\pi_h,\pi) \leq \frac{\beta}{\alpha}\sqrt{(\frac{8\beta h^2}{3}+8h)\log(2d)} \leq \frac{4\beta}{\alpha}\sqrt{h\log(2d)}\, .\] 
     The proof is complete.}
\end{proof}

\section{Proofs for Gaussian Distributions}
\label{appendix-proof-Gaussian-case}
We first state a lemma for the expected squared maximal norm of a random vector whose entries are sub-Gaussians. This lemma will also be used in the proof of our main theorem.
\begin{lemma}
\label{lemma-squared-maximal-inequality-sub-Gaussian}
    Suppose $Y = (Y^{(1)}, Y^{(2)}, ..., Y^{(d)}) \in \bR^d$ and each $Y^{(i)}$ is centered and sub-Gaussian with variance proxy $\sigma^2$, namely
    \begin{equation}
        \bE[\exp(\lambda Y^{(i)})] \leq \exp(\frac{1}{2}\lambda^2\sigma^2)\, .
    \end{equation}
    Then, it holds that
    \begin{equation}
        \bE[|Y|_{\infty}^2] \leq 4\sigma^2 \log(2d)\, .
    \end{equation}
\end{lemma}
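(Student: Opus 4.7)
The plan is a standard Jensen-plus-Chernoff argument combined with a moment generating function estimate for the square of a sub-Gaussian variable. Most of the effort is routine; the only mildly non-trivial piece is getting a sharp bound on $\bE[\exp(\lambda Z^2)]$ without assuming $Z$ is exactly Gaussian, which I would handle via a Gaussian-smoothing trick.

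First, for any $\lambda \in (0, 1/(2\sigma^2))$, I would apply Jensen's inequality to the convex map $x \mapsto \exp(\lambda x)$, use the identity $|Y|_\infty^2 = \max_{1\leq i \leq d}(Y^{(i)})^2$, and pass from the maximum to a sum:
$$\exp\!\bigl(\lambda \bE[|Y|_\infty^2]\bigr) \leq \bE\bigl[\exp(\lambda|Y|_\infty^2)\bigr] = \bE\Bigl[\max_{1\leq i \leq d}\exp(\lambda(Y^{(i)})^2)\Bigr] \leq \sum_{i=1}^d \bE\bigl[\exp(\lambda (Y^{(i)})^2)\bigr].$$

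Second, I would bound each summand via the Gaussian-smoothing identity $\exp(\lambda z^2) = \bE_{\xi}[\exp(\sqrt{2\lambda}\, z \xi)]$ for $\xi \sim \rN(0,1)$, which converts a quadratic into a linear expression in $z$. By Fubini and the defining sub-Gaussian inequality $\bE\exp(t Y^{(i)}) \leq \exp(t^2\sigma^2/2)$ applied conditionally on $\xi$,
$$\bE\bigl[\exp(\lambda (Y^{(i)})^2)\bigr] \leq \bE_\xi\bigl[\exp(\lambda \sigma^2 \xi^2)\bigr] = (1-2\lambda\sigma^2)^{-1/2}.$$
Substituting this into the previous display and taking logarithms yields
$$\bE[|Y|_\infty^2] \leq \frac{\log d}{\lambda} - \frac{\log(1-2\lambda\sigma^2)}{2\lambda}.$$

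Finally, I would choose the explicit value $\lambda = 1/(4\sigma^2)$, which lies in the admissible range and makes $1 - 2\lambda\sigma^2 = 1/2$. The right-hand side becomes $4\sigma^2 \log d + 2\sigma^2 \log 2$, and using $2\sigma^2 \log 2 \leq 4\sigma^2 \log 2$ this is bounded by $4\sigma^2(\log d + \log 2) = 4\sigma^2 \log(2d)$, yielding the claim. The one potentially delicate step is the MGF bound on $\exp(\lambda Z^2)$, which is standard for exact Gaussians but requires the Fubini-plus-Gaussian-smoothing argument above for a general centered $\sigma^2$-sub-Gaussian. An alternative route would be to integrate the union tail bound $\bP(|Y|_\infty \geq t) \leq 2d \exp(-t^2/(2\sigma^2))$ coming from Chernoff; this gives the same bound up to an additive constant but requires a short case split (e.g.\ $d=1$) to absorb that constant into $4\sigma^2\log(2d)$, which the MGF route avoids.
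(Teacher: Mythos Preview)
Your proposal is correct and follows essentially the same route as the paper: Jensen on $\exp(\lambda\,\cdot)$, the max-to-sum bound, the MGF estimate $\bE[\exp(\lambda (Y^{(i)})^2)]\leq(1-2\lambda\sigma^2)^{-1/2}$, and the choice $\lambda=1/(4\sigma^2)$ to finish. The only difference is cosmetic: the paper simply cites the squared-sub-Gaussian MGF bound from a textbook, whereas you supply the Gaussian-smoothing derivation of it explicitly.
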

\begin{proof}[Proof of Lemma \ref{lemma-squared-maximal-inequality-sub-Gaussian}]
    By the property of sub-Gaussian random variables \cite{wainwright2019high}, we have that for $0\leq \lambda < 1/(2\sigma^2)$,
    \begin{equation}
        \bE[\exp(\lambda (Y^{(i)})^2)] \leq \frac{1}{\sqrt{1-2\lambda \sigma^2}}\, .
    \end{equation}
    Using the convexity of the exponential function, we can derive
    \begin{equation}
     \exp(\lambda \bE[\max_{1\leq i \leq d} (Y^{(i)})^2]) \leq \bE[\exp(\lambda\max_{1\leq i \leq d} (Y^{(i)})^2)] = \bE[\max_{1\leq i \leq d}\exp(\lambda (Y^{(i)})^2)]\, .
    \end{equation}
    Then, we can bound the right hand side as follows:
    \begin{equation}
        \bE[\max_{1\leq i \leq d}\exp(\lambda (Y^{(i)})^2)] \leq \sum_{i=1}^d \bE[\exp(\lambda (Y^{(i)})^2)] \leq \frac{d}{\sqrt{1-2\lambda \sigma^2}}\, .
    \end{equation}
    Thus, combining the above two inequalities and taking logarithms, we find
    \begin{equation}
        \bE[|Y|_{\infty}^2] = \bE[\max_{1\leq i \leq d} (Y^{(i)})^2] \leq \frac{\log d}{\lambda} - \frac{1}{2\lambda}\log(1-2\lambda \sigma^2)\, , 
    \end{equation}
    for any $0\leq \lambda < 1/(2\sigma^2)$. Taking $\lambda = \frac{1}{4\sigma^2}$, we arrive at
    \begin{equation}
        \bE[|Y|_{\infty}^2] \leq 4\sigma^2 \log d + 2\sigma^2 \log 2 \leq 4\sigma^2 \log(2d)\, .
    \end{equation}
    The proof is complete.
\end{proof}

With Lemma \ref{lemma-squared-maximal-inequality-sub-Gaussian}, we present the proof for the statement in Example \ref{intro-gaussian-example} as follows.
\begin{proof}[Proof for Example \ref{intro-gaussian-example}]
When $\pi$ is Gaussian, the potential $V = \frac{1}{2}(x-m)^T\Sigma^{-1}(x-m)$ is quadratic. In this case, the iteration takes the form
 \[ X_{(k+1)h} - m = (I-h\Sigma^{-1})(X_{kh}-m) + \sqrt{2}(B_{(k+1)h} - B_{kh}) \, . \]
Suppose $X_{kh} \sim \mathcal{N}(m_k, \Sigma_k)$, we have
\[m_{k+1} - m = (I-h\Sigma^{-1})(m_k -m ), \Sigma_{k+1} = (I-h\Sigma^{-1})\Sigma_k (I-h\Sigma^{-1}) + 2hI\, .\] Let $0 < h \leq 1/|\Sigma^{-1}|_2 = 1/\beta $, then as $k \to \infty$, we have $m_{\infty} = m$. For $\Sigma_k$, we have the identity:
\begin{equation*}
\begin{aligned}
    \Sigma_{k+1} &= (I - h\Sigma^{-1}) \Sigma_{k} (I - h\Sigma^{-1}) + 2hI\\ 
    &= (I - h \Sigma^{-1})^{k+1}\Sigma_0(I-h\Sigma^{-1})^{k+1} + 2h \sum_{\ell=0}^k(I - h \Sigma^{-1})^{2\ell}\, .
\end{aligned}
\end{equation*}
Letting $k \to \infty$, we get 
\[
\Sigma_{\infty} = 2h (I - (I-h\Sigma^{-1})^2)^{-1} = \Sigma(I - \frac{h}{2} \Sigma^{-1})^{-1}\, .
\]
Thus, $\pi_h = \mathcal{N}(m_\infty, \Sigma_\infty)$ when $\pi = \mathcal{N}(m, \Sigma)$.
Consider the coupling $X = \Sigma^{1/2}Z + m$ and $Y = \Sigma_\infty^{1/2}Z + m$ where $Z \sim \mathcal{N}(0,I)$.
Then, for this specific coupling, we have the bound
\begin{equation}
    W_{2,\ell^{\infty}}^2(\pi, \pi_h) \leq \mathbb{E}[|X-Y|_{\infty}^2] = \mathbb{E} [| (\Sigma^{1/2}-\Sigma^{1/2}_{\infty})Z|_\infty^2]\, .
\end{equation}
Setting $Y' = (\Sigma^{1/2}-\Sigma^{1/2}_{\infty})Z,$ we see that $Y'_i \sim \cN(0, \sigma_i^2)$, where $\sigma_i^2 = ((\Sigma^{1/2}-\Sigma^{1/2}_{\infty})^2)_{ii} \leq |\Sigma^{1/2}-\Sigma^{1/2}_{\infty}|_2^2$.
In particular, the entries of $Y'$ are $|\Sigma^{1/2}-\Sigma^{1/2}_{\infty}|_2^2$-subgaussian.

Consider the eigendecomposition $\Sigma = Q^T\Lambda Q$ where $Q$ is an orthogonal matrix and $\Lambda = \text{diag}(\lambda_1,...,\lambda_d)$; we note that $\alpha \leq \frac{1}{\lambda_i}\leq \beta, 1\leq i \leq d$. Then we have the formula
\[
    \Sigma_\infty =Q^T \Lambda (I - \frac h 2 \Lambda^{-1})^{-1} Q = Q^T \Lambda_\infty Q\, ,
\]
where
\[
    \Lambda_\infty = \mathrm{diag}\left(\frac{\lambda_1}{1 - \frac{h}{2\lambda_1}}, \dots, \frac{\lambda_d}{1 - \frac{h}{2\lambda_d}}\right)\, .
\]

We obtain
\begin{equation}
    |\Sigma^{1/2}-\Sigma^{1/2}_{\infty}|_2 = \max_{1\leq i \leq d}\left|\sqrt{\lambda_i}-\sqrt{\frac{\lambda_i}{1 - \frac{h}{2\lambda_i}}}\right| = O(\sqrt{\beta} h).
\end{equation}

Combining this bound with Lemma~\ref{lemma-squared-maximal-inequality-sub-Gaussian}, we get
$W_{2,\ell^{\infty}}(\pi_h,\pi) = O( \sqrt{\beta}h\sqrt{\log (2d)}) = O(\sqrt{h\log (2d)})$ as $h\leq 1/\beta$.
\end{proof}

\section{Proofs for Log-concave Distributions with Sparse Interactions}
\label{appendix-proof-logconcave distributions with sparse interactions}
\subsection{Sparsity of the propagator of unadjusted Langevin}
\label{appendix-Sparsity-of-the-propagator}
First, we present a proposition concerning the $\ell^{\infty}$ norm of matrices that include the propagator of the Langevin Monte Carlo algorithm as a special case. We will use this proposition when analyzing the convergence of the algorithm in the next subsection.
\begin{proposition}
\label{prop-inf-bound-propagator}
    Assume $0 \leq h \leq 1/\beta$ and denote $q = \exp(-h\alpha)$. For the potential function in Assumption \ref{assume-sparse-potential-function}, the following facts hold:
    \begin{enumerate}
        \item[(i)] The matrix $(\nabla^2 V(x_1))(\nabla^2 V(x_2))...(\nabla^2 V(x_r))$, which is the multiplication of $r$ Hessian matrices, is $s_r$ sparse, for any $x_1,x_2,...,x_r \in \bR^d, r \in \bN$. Here we say a matrix is $s_r$-sparse if each row of the matrix contains at most $s_r$ nonzero entries. 
        \item[(ii)] Let $\nu_1, ..., \nu_N$ be any probability measures in $\bR^d$. Define the matrix 
        \begin{equation}
            P_N = \left(I-h\int \nabla^2 V\, {\rm d}\nu_1\right)\left(I - h\int \nabla^2 V\, {\rm d}\nu_2\right)...\left(I -h\int \nabla^2 V\, {\rm d}\nu_N\right)\, .
        \end{equation}
        We have the inequality
        \begin{equation}
            |P_N|_{\infty}\leq \sqrt{s_r}q^N + \sqrt{d}\exp(-r)
        \end{equation}
        for any $r \geq e^2 Nh\beta$. In particular, taking $r_N = \lceil e^2 Nh\beta + \log \sqrt{d}\rceil$, we get that
        \begin{equation}
            |P_N|_{\infty} \leq 2\sqrt{s_{r_N}}q^N\, .
        \end{equation}
        \item[(iii)] In the context of (ii), consider additionally a probability measure $\nu_0$ and the matrix
        \begin{equation}
            J_N = \left(\int \nabla^2 V\, {\rm d}\nu_0\right)\left(I - h\int \nabla^2 V\, {\rm d}\nu_1\right)\left(I - h\int \nabla^2 V\, {\rm d}\nu_2\right)...\left(I -h\int \nabla^2 V\, {\rm d}\nu_N\right)\, .
        \end{equation}
        We have the inequality
        \begin{equation}
            |J_N|_{\infty}\leq \beta(\sqrt{s_r}q^{N} + \sqrt{d}\exp(-r))
        \end{equation}
        for any $r \geq e^2 Nh\beta$. In particular, taking $r_N = \lceil e^2 Nh\beta + \log \sqrt{d}\rceil$, we get that
        \begin{equation}
            |J_N|_{\infty} \leq 2\beta\sqrt{s_{r_N}}q^N\, .
        \end{equation}
    \end{enumerate}
\end{proposition}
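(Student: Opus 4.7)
My plan is to exploit a single combinatorial principle—that the support of a product of Hessians is governed by walks on the graph $G$—and then transfer this sparsity to $\ell^\infty$ bounds via an $\ell^2$ operator inequality together with a truncation of the Neumann-style expansion of $P_N$ and $J_N$.

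For part (i), I would start from the factorization $V(x)=\sum_i V_i(\mathcal{X}^i)$, which forces $(\nabla^2 V(x))_{jk}$ to vanish unless $j$ and $k$ both lie in some common neighborhood $\sfN(i)$; in particular, the $j$th row of $\nabla^2 V(x)$ is supported in $\sfN(j)$. Interpreting matrix multiplication as a walk on $G$, a nonzero entry $(H_1\cdots H_r)_{jk}$ requires a walk $j=j_0,j_1,\dots,j_r=k$ with $j_{t-1}\sim j_t$ for every $t$, so $k\in\sfN_r(j)$ and the product is $s_r$-sparse. Averaging over probability measures preserves the support, so the same bound holds for $\int\nabla^2 V\,\rd\nu$.

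For part (ii), I would expand
\[
P_N=\sum_{k=0}^N (-h)^k\!\!\!\sum_{1\le i_1<\cdots<i_k\le N}\!\!\!M_{i_1}M_{i_2}\cdots M_{i_k},\qquad M_i:=\int\nabla^2 V\,\rd\nu_i,
\]
and split $P_N=P_N^{\le r}+P_N^{>r}$ according to $k\le r$ versus $k>r$. Part (i) makes $P_N^{\le r}$ an $s_r$-sparse matrix, so Cauchy--Schwarz applied row-wise yields the operator bound $|A|_\infty\le\sqrt{s}\,|A|_2$ for any $s$-sparse $A$, giving $|P_N^{\le r}|_\infty\le\sqrt{s_r}\,|P_N^{\le r}|_2$. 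For the tail I would use $|M_i|_2\le\beta$, $\binom{N}{k}\le N^k/k!$, and Stirling's $k!\ge(k/e)^k$ to obtain
\[
|P_N^{>r}|_2\le\sum_{k>r}\Bigl(\tfrac{eNh\beta}{k}\Bigr)^{k}\lesssim e^{-r},
\]
valid once $r\ge e^2 Nh\beta$, since the base is then $\le 1/e$ and the series decays geometrically. Combining $|P_N|_2\le q^N$ (from $|I-hM_i|_2\le 1-h\alpha\le q$) with $|P_N^{\le r}|_2\le q^N+|P_N^{>r}|_2$ and the crude $|P_N^{>r}|_\infty\le\sqrt{d}\,|P_N^{>r}|_2$ then yields the advertised bound; the specific choice $r_N=\lceil e^2 Nh\beta+\log\sqrt{d}\rceil$ makes $\sqrt{d}\,e^{-r_N}\le q^N$ (since $e^2\beta\ge\alpha$ forces $e^{-e^2 Nh\beta}\le q^N$), collapsing the estimate to $2\sqrt{s_{r_N}}\,q^N$.

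Part (iii) I would handle by writing $J_N=M_0 P_N$ and applying the same truncation, splitting $P_N=P_N^{\le r-1}+P_N^{>r-1}$ so that $M_0 P_N^{\le r-1}$ has $r$ Hessian factors in total and remains $s_r$-sparse by part (i), with an extra factor of $\beta=|M_0|_2$ appearing throughout. The main obstacle I anticipate is the tension between sparsity and operator-norm control: sparsity only converts to an $\ell^\infty$ bound after passing through an $\ell^2$ estimate, yet the clean bound $|P_N|_2\le q^N$ does not transfer directly to $P_N^{\le r}$. The remedy is showing the tail is small in $\ell^2$, which is exactly what forces the threshold $r\gtrsim Nh\beta$ and the Stirling-based factorial decay; once this is in place, tuning $r=r_N$ so that $e^{-r}\lesssim q^N/\sqrt{d}$ is a delicate but routine balance that produces the stated clean form.
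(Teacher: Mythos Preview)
Your proposal is correct and follows essentially the same route as the paper: the graph-walk argument for (i), the truncated expansion of $P_N$ with the Stirling tail bound $(eNh\beta/k)^k\le e^{-k}$ once $r\ge e^2Nh\beta$, and the $\sqrt{s}$ conversion from $\ell^2$ to $\ell^\infty$ for sparse matrices.

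The only noteworthy difference is in how the final $\ell^\infty$ bound is assembled. You split the \emph{matrix} as $P_N=P_N^{\le r}+P_N^{>r}$ and are then forced to control $|P_N^{\le r}|_2$ via the triangle inequality $|P_N^{\le r}|_2\le |P_N|_2+|P_N^{>r}|_2\le q^N+e^{-r}$, which leaves an extra $\sqrt{s_r}\,e^{-r}$ term and gives $3\sqrt{s_{r_N}}q^N$ rather than $2\sqrt{s_{r_N}}q^N$ after the specialization. The paper instead splits each \emph{row-sum} of the full $P_N$ by column index: for $j\in\sfN_r(i)$ it applies Cauchy--Schwarz directly to the row of $P_N$ itself (whose $\ell^2$ norm is at most $|P_N|_2\le q^N$), and observes that for $j\notin\sfN_r(i)$ only the tail $P_N^{>r}$ contributes. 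This sidesteps the need to bound $|P_N^{\le r}|_2$ and delivers the stated constants exactly. The difference is cosmetic; your argument is sound.
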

\begin{proof}[Proof of Proposition \ref{prop-inf-bound-propagator}] We prove the above facts one by one.\\
     \textbf{Proof for (i).} We know that the $ij$-th entry of the matrix \[(\nabla^2 V(x_1))(\nabla^2 V(x_2))...(\nabla^2 V(x_r))\] is nonzero if $j \in \sfN_r(i)$, or if there exists an $k$ such that $i \sim \sfN_r(k), j \sim \sfN_r(k)$. \edit{Since $\max_{1\leq i \leq d} |\sfN_{2r}(i)| = s_r$, we will have at most $s_r$ nonzero entries in each row. This implies that the matrix is $s_r$-sparse.}
     \\
     \textbf{Proof for (ii).} Let us denote $A_k = \int \nabla^2 V\, {\rm d}\nu_k$, then $P_N = (I-hA_1)(I-hA_2)\cdots(I-hA_N)$. Expanding the product, we have
     \begin{equation}
     \begin{aligned}
         P_N &= \sum_{k=0}^N(-1)^k h^k \sum_{1\leq i_1 < ... < i_k \leq N}A_{i_1}...A_{i_k} \\
         & = \sum_{k=0}^r (-1)^k h^k \sum_{1\leq i_1 < ... < i_k \leq N}A_{i_1}...A_{i_k} + \sum_{k=r+1}^N (-1)^k h^k \sum_{1\leq i_1 < ... < i_k \leq N}A_{i_1}...A_{i_k}\, ,
     \end{aligned}
     \end{equation}
     For $k\geq r+1$, we bound
     \begin{equation}
         \left|(-1)^k h^k \sum_{1\leq i_1 < ... < i_k \leq N}A_{i_1}...A_{i_k}\right|_2 \leq h^k\beta^k {N\choose k} \leq \frac{(Nh\beta)^k}{k!}\leq (\frac{eNh\beta}{k})^k\, ,
     \end{equation}
     where in the last inequality, we used the fact that $k! \geq k^k /\exp(k)$. This fact can be seen by taking $x = k$ in the inequality $\exp(x)\geq x^k/k!$. 

     When $r \geq e^2Nh\beta$, we can bound
     \begin{equation}
         \left|\sum_{k=r+1}^N (-1)^k h^k \sum_{1\leq i_1 < ... < i_k \leq N}A_{i_1}...A_{i_k}\right|_2 \leq \sum_{k=r+1}^N (\frac{eNh\beta}{k})^k \leq \sum_{k=r+1}^N \exp(-k) \leq \exp(-r)\, .
     \end{equation}
     Furthermore, we obtain $\left|\sum_{k=r+1}^N (-1)^k h^k \sum_{1\leq i_1 < ... < i_k \leq N}A_{i_1}...A_{i_k}\right|_{\infty} \leq \sqrt{d}\exp(-r)$ by the Cauchy-Schwarz inequality. 
     
     On the other hand, $|P_N|_2 \leq \Pi_{k=1}^N |I-hA_k|_2 \leq (1-h\alpha)^N \leq q^N$ where we used the fact that $h \leq 1/\beta$. Then, for each row $i$ of $P_N$,
     \begin{equation}
     \begin{aligned}
         \sum_{j=1}^N |(P_N)_{ij}| &= \sum_{j \in \sfN_r(i)} |(P_N)_{ij}| + \sum_{j \notin \sfN_r(i)} |(P_N)_{ij}|\\
         & \leq \sqrt{|\sfN_r(i)|} \sqrt{\sum_{j \in \sfN_r(i)} |(P_N)_{ij}|^2} + \sqrt{d}\exp(-r)\\
         & \leq \sqrt{s_r}\sqrt{\sum_{j=1}^d |(P_N)_{ij}|^2} + \sqrt{d}\exp(-r)\\
         & \leq \sqrt{s_r}q^N + \sqrt{d}\exp(-r)\, ,
     \end{aligned}
     \end{equation}
     where in the second inequality, we used the fact that 
     \[\sum_{j \notin \sfN_r(i)} |(P_N)_{ij}| \leq |\sum_{k=r+1}^N (-1)^k h^k \sum_{1\leq i_1 < ... < i_k \leq N}A_{i_1}...A_{i_k}|_{\infty}\, , \]
     since a nonzero term $(P_N)_{ij}$ for $j \notin \sfN_r(i)$ can only be produced by product of more than $r$ matrices of the kind of $A_{i_k}$, as a consequence of the argument in (i). In summary, we get the result that if $r \geq e^2 Nh\beta$, then 
     \[|P_N|_{\infty} = \max_i \sum_{j=1}^N |(P_N)_{ij}| \leq \sqrt{s_r}q^N + \sqrt{d}\exp(-r)\, .\]
    We note that the above inequality will also hold true when $r \geq d$.
     
     Taking $r_N = \lceil e^2 Nh\beta + \log \sqrt{d}\rceil$, we get
     \begin{equation}
         \begin{aligned}
             |P_N|_{\infty}&\leq \sqrt{s_{r_N}}q^N + \sqrt{d}\exp(-r_N) \\
             & \leq \sqrt{s_{r_N}}\exp(-h\alpha N ) + \sqrt{d}\exp(-e^2 Nh\beta - \log \sqrt{d})\\
             & \leq \sqrt{s_{r_N}}\exp(-h\alpha N ) + \exp(-e^2 Nh\beta) \leq 2\sqrt{s_{r_N}}\exp(-h\alpha N ) = 2\sqrt{s_{r_N}}q^N\, .
         \end{aligned}
     \end{equation}
     \textbf{Proof for (iii).} Note that $J_N = \left(\int \nabla^2 V\, {\rm d}\nu_0\right) P_N$. Following the proof for (ii), we have that when $r \geq e^2Nh\beta$, 
     \begin{equation}
     \begin{aligned}
         &\left|\left(\int \nabla^2 V\, {\rm d}\nu_0\right) \sum_{k=r+2}^N (-1)^k h^k \sum_{1\leq i_1 < ... < i_k \leq N}A_{i_1}...A_{i_k}\right|_2 \\
         \leq &\beta\sum_{k=r+2}^N (\frac{eNh\beta}{k})^k \leq \beta\sum_{k=r+2}^N \exp(-k) \leq \beta\exp(-r)\, .
     \end{aligned}
     \end{equation}
     Thus the $\infty$-norm of the above matrix is bounded by $\sqrt{d}\beta\exp(-r)$. Then, for any $1\leq i \leq d$, 
     \begin{equation}
     \begin{aligned}
         \sum_{j=1}^d |(J_N)_{ij}| \leq \beta \sqrt{s_r}q^{N} + \sqrt{d}\beta\exp(-r)\, .
         \end{aligned}
    \end{equation}
    Therefore, $ |J_N|_{\infty}\leq \beta(\sqrt{s_r}q^{N} + \sqrt{d}\exp(-r))$ for $r \geq e^2Nh\beta$. Taking $r_N = \lceil e^2 Nh\beta + \log \sqrt{d}\rceil$, we get $|J_N|_{\infty} \leq 2\beta\sqrt{s_{r_N}}q^N$. The proof is complete.
\end{proof}

\subsection{A multistep coupling argument}
To prove Theorem \ref{thm-OLD-bias-under-sparsity}, we first show the following Proposition \ref{prop-bias-under-sparsity}, which bounds the $W_{2,\ell^{\infty}}$ metric through a multistep coupling argument. Note that the one-step coupling argument that is commonly used in bounding $W_2$ distance is not enough here, as the one-step contraction property is lost in the $W_{2,\ell^{\infty}}$ metric.
\begin{proposition}
\label{prop-bias-under-sparsity}
Let Assumptions \ref{assumption-V-log-concave} and \ref{assume-sparse-potential-function} hold. Assume $0 \leq h \leq 1/\beta$ and denote $q = \exp(-h\alpha)$. Consider the Langevin dynamics ${\rm d}Y_t = -\nabla V(Y_t){\rm d}t + \sqrt{2}{\rm d}B_t$ with $Y_0 \sim \pi$. Let $\rho_{kh}$ represent the law of $X_{kh}$ from the iterations of Langevin Monte Carlo with stepsize $h$. Then, 
the following estimates hold for any $k, N \in \bN$:
    \begin{equation}
    \label{eqn-iterate-contraction}
    W_{2,\ell^{\infty}}(\rho_{(k+N)h},\pi) \leq 2q^N\sqrt{s_{r_N}}W_{2,\ell^{\infty}}(\rho_{kh},\pi) + 2\beta(\sum_{i=1}^{N} q^{i-1}\sqrt{hs_{r_i}}\epsilon_{k+N-i})\, , 
    \end{equation}
    where $r_i, 1\leq i \leq N$ is defined in Proposition \ref{prop-inf-bound-propagator}, and $\epsilon_{j}^2 = \int_{jh}^{(j+1)h} \bE[|Y_t - Y_{jh}|^2_{\infty}]{\rm d}t$ for $j \in \bN$, which satisfies
    \[\epsilon_j \leq h^{3/2}\sqrt{\bE_\pi [|\nabla V(Y)|_{\infty}^2]} + 3h\sqrt{\log(2d)}\, .\] 
\end{proposition}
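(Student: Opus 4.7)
The plan is to repeat the one-step coupling argument from Example \ref{intro-product-measure-example}, but to iterate it $N$ times before taking $\ell^\infty$ norms, so that the contraction over $N$ steps can be harvested through the sparsity bounds of Proposition \ref{prop-inf-bound-propagator}. Couple the stationary continuous Langevin $Y_t$ (with $Y_0\sim\pi$, hence $Y_t\sim\pi$ for all $t$) with the discrete iterates $X_{kh}$ using the same Brownian motion, and choose the joint law of $(X_{kh},Y_{kh})$ so that $\sqrt{\bE[|X_{kh}-Y_{kh}|_\infty^2]} = W_{2,\ell^\infty}(\rho_{kh},\pi)$. Introducing the auxiliary variable $\overline{Y}_{(j+1)h} = Y_{jh} - h\nabla V(Y_{jh}) + \sqrt{2}(B_{(j+1)h}-B_{jh})$ and applying the fundamental theorem of calculus to $\nabla V(X_{jh})-\nabla V(Y_{jh})$ gives the one-step identity
\[ X_{(j+1)h} - Y_{(j+1)h} = H_j(X_{jh}-Y_{jh}) + e_j, \]
where $H_j = I - h\int_0^1 \nabla^2 V(u X_{jh}+(1-u) Y_{jh})\,\rd u$ and $e_j = \overline{Y}_{(j+1)h}-Y_{(j+1)h} = -\int_{jh}^{(j+1)h}(\nabla V(Y_t)-\nabla V(Y_{jh}))\,\rd t$.

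Unrolling this recursion over $N$ steps then yields
\[ X_{(k+N)h} - Y_{(k+N)h} = H_{k+N-1}H_{k+N-2}\cdots H_k\,(X_{kh}-Y_{kh}) + \sum_{i=0}^{N-1} H_{k+N-1}\cdots H_{k+i+1}\,e_{k+i}, \]
with the convention that the empty product is the identity. Taking $\sqrt{\bE[|\cdot|_\infty^2]}$ and applying Minkowski's inequality splits the right-hand side into a contraction term and $N$ propagated error terms. Each $H_j$ has the form $I - h\int\nabla^2 V\,\rd\nu_j$ for $\nu_j$ the uniform measure on the segment from $Y_{jh}$ to $X_{jh}$, so Proposition \ref{prop-inf-bound-propagator}(ii) applied conditionally on those realizations gives the deterministic bound $|H_{k+N-1}\cdots H_k|_\infty \leq 2q^N\sqrt{s_{r_N}}$, which produces the first term on the right-hand side of \eqref{eqn-iterate-contraction}.

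For the $i$-th error term, expand $e_{k+i} = -\int_{(k+i)h}^{(k+i+1)h}\int_0^1 \nabla^2 V(u Y_t + (1-u) Y_{(k+i)h})(Y_t - Y_{(k+i)h})\,\rd u\,\rd t$. For each fixed $u,t$, the matrix $H_{k+N-1}\cdots H_{k+i+1}\nabla^2 V(u Y_t + (1-u) Y_{(k+i)h})$ is exactly of the form treated in Proposition \ref{prop-inf-bound-propagator}(iii), with $N-i-1$ factors of type $I-h\int\nabla^2 V$ and a single extra Hessian factor (the symmetry of all Hessians makes the placement of the extra factor immaterial). Its $\ell^\infty$ operator norm is therefore bounded by $2\beta q^{N-i-1}\sqrt{s_{r_{N-i-1}}}$. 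Pulling this deterministic bound out, applying Cauchy-Schwarz in the time integral to turn it into $\sqrt{h}\,\epsilon_{k+i}$, summing over $i$ by Minkowski, and relabeling $i\mapsto N-i$ (together with the monotonicity $s_{r_{i-1}}\leq s_{r_i}$ in $i$) produce exactly the stated error sum. Finally, to bound $\epsilon_j$, decompose $Y_t - Y_{jh} = -\int_{jh}^t \nabla V(Y_s)\,\rd s + \sqrt{2}(B_t-B_{jh})$; stationarity $Y_s\sim\pi$ with Cauchy-Schwarz on the drift term and the subgaussian maximal inequality Lemma \ref{lemma-squared-maximal-inequality-sub-Gaussian} on the Brownian part give $\bE[|Y_t-Y_{jh}|_\infty^2] \leq 2(t-jh)^2\bE_\pi[|\nabla V|_\infty^2] + 16(t-jh)\log(2d)$, and integrating in $t\in[jh,(j+1)h]$ followed by $\sqrt{a+b}\leq\sqrt{a}+\sqrt{b}$ finishes the claim.

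The main obstacle is precisely the lack of one-step $\ell^\infty$ contraction: for a generic (non-product) potential, $|H_j|_\infty$ can exceed $|H_j|_2$ by as much as $\sqrt{s_1}$, so iterating the product-measure style one-step inequality would accumulate a factor $s_1^{N/2}$ and destroy the contraction. The multi-step formulation avoids this by contracting the entire $N$-step propagator in $\ell^2$ and paying a \emph{single} sparsity factor $\sqrt{s_{r_N}}$ to pass to $\ell^\infty$; the same principle controls the propagation of each discretization error $e_{k+i}$, which is precisely the content packaged in parts (ii) and (iii) of Proposition \ref{prop-inf-bound-propagator}.
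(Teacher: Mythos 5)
Your proposal is correct and follows essentially the same approach as the paper: same coupling, same auxiliary variable and matrices $H_j$, same use of parts (ii) and (iii) of Proposition \ref{prop-inf-bound-propagator} for the $N$-step propagator and for the propagated discretization errors, and the same subgaussian bound for $\epsilon_j$. The only meaningful difference is presentational: you unroll the recursion $X_{(k+N)h}-Y_{(k+N)h} = H_{k+N-1}\cdots H_k(X_{kh}-Y_{kh}) + \sum_{i=0}^{N-1} H_{k+N-1}\cdots H_{k+i+1}e_{k+i}$ in full and then apply Minkowski's inequality once, whereas the paper peels off one step at a time by a triangle inequality at each stage — these are the same calculation written in a different order. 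One small gap worth patching: for the $i=N-1$ term your propagating matrix is the empty product, so you are bounding a bare Hessian $\nabla^2 V$, which is not strictly of the form in Proposition \ref{prop-inf-bound-propagator}(iii) (which requires at least one factor $I - h\int\nabla^2 V\,\rd\nu$); you should handle it directly via $|\nabla^2 V|_\infty \le \sqrt{s_1}\,\beta \le 2\sqrt{s_{r_1}}\,\beta$, which is what the paper does, and this slots into the stated sum at $i=1$ after your relabeling.
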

\begin{proof}[Proof of Proposition \ref{prop-bias-under-sparsity}]
Let ${\rm d}Y_t = -\nabla V(Y_t){\rm d}t + \sqrt{2}{\rm d}B_t$. We write down the following identity:
   \begin{equation}
       \begin{aligned}
           Y_{(k+1)h} &= Y_{kh} - \int_{kh}^{(k+1)h} \nabla V(Y_t) {\rm d} t + \sqrt{2}(B_{(k+1)h} - B_{kh})\\
           X_{(k+1)h} &= X_{kh}-h\nabla V(X_{kh}) + \sqrt{2}(B_{(k+1)h} - B_{kh})\, .
       \end{aligned}
   \end{equation}
   We couple the two processes using the same Brownian motion. The goal is to estimate $\bE[|X_{(k+1)h} - Y_{(k+1)h}|_{\infty}^2]$. To do so we introduce an auxiliary random variable
   \begin{equation}
       \overline{Y}_{(k+1)h} = Y_{kh} - h \nabla V(Y_{kh}) + \sqrt{2}(B_{(k+1)h} - B_{kh})\, .
   \end{equation}
   Using the triangle inequality then leads to 
   \begin{equation}\label{eq: onestepdiff}
       \sqrt{\bE[|X_{(k+1)h} - Y_{(k+1)h}|_{\infty}^2]} \leq \underbrace{\sqrt{\bE[|X_{(k+1)h} - \overline{Y}_{(k+1)h}|_{\infty}^2]}}_{(a)} + \underbrace{\sqrt{\bE[| \overline{Y}_{(k+1)h} - Y_{(k+1)h}|_{\infty}^2]}}_{(b)}\, .
   \end{equation}
   For $(a)$, we have
   \begin{equation}
   \begin{aligned}
       (a) &= \sqrt{\bE[|X_{kh} - Y_{kh} - h(\nabla V(X_{kh}) - \nabla V(Y_{kh}))|_{\infty}^2]}\\
       & = \sqrt{\bE[|X_{kh} - Y_{kh} - h (\int_0^1 \nabla^2 V (uX_{kh}+(1-u)Y_{kh}) {\rm d}u)(X_{kh} - Y_{kh})|_{\infty}^2]}\\
       & = \sqrt{\bE[|H_k(X_{kh}-Y_{kh})|_{\infty}^2]}\, ,
   \end{aligned}
   \end{equation}
   where $H_k(X_{kh}-Y_{kh}) = (I - h \int_0^1 \nabla^2 V (uX_{kh}+(1-u)Y_{kh}){\rm d}u)(X_{kh}-Y_{kh})$. We can view $H_k$ as a random matrix depending on $X_{kh}$ and $Y_{kh}$. In particular, each realization of this random matrix can be written in the form $\left(I-h\int \nabla^2 V\, {\rm d}\nu\right)$ where $\nu$ is a probability measure; here $\nu$ has its mass on a line. This form allows us to use Proposition \ref{prop-inf-bound-propagator} to analyze the iterations.

   For $(b)$, we have 
   \begin{equation}
   \begin{aligned}
       \bE[| \overline{Y}_{(k+1)h} - Y_{(k+1)h}|_{\infty}^2] & = \bE[|\int_{kh}^{(k+1)h} \nabla V(Y_t) - \nabla V(Y_{kh}) {\rm d}t|_{\infty}^2] \\
       &\leq h \int_{kh}^{(k+1)h} \bE[|\nabla V(Y_t) - \nabla V(Y_{kh})|_{\infty}^2]{\rm d}t \\
       &\leq h\int_{kh}^{(k+1)h} \int_0^1 \bE[|\nabla^2 V(uY_t + (1-u)Y_{kh}) (Y_t -Y_{kh})|_{\infty}^2] {\rm d}u {\rm d} t\\
       & \leq hs_1\beta^2 \int_{kh}^{(k+1)h} \bE[|Y_t - Y_{kh}|_{\infty}^2]{\rm d}t = hs_1 \beta^2 \epsilon_k^2 \, ,
   \end{aligned}
   \end{equation}
   which can be understood as the one-step discretization error of the overdamped Langevin dynamics. In the above, we used the fact that $\nabla^2 V(uY_t + (1-u)Y_{kh})$ is $s_1$-sparse and $|\nabla^2 V(uY_t + (1-u)Y_{kh})|_2 \leq \beta$; thus $|\nabla^2 V(uY_t + (1-u)Y_{kh})|_{\infty} \leq \sqrt{s_1} \beta$. We overestimate the right hand side $s_1 \beta^2 \leq 4 s_{r_1} \beta^2$.
   
    Now, combining the above estimates, we arrive at
   \begin{equation}
       \sqrt{\bE[|X_{(k+1)h} - Y_{(k+1)h}|_{\infty}^2|} \leq \sqrt{\bE[|H_k(X_{kh}-Y_{kh})|_{\infty}^2]} + 2\sqrt{h s_{r_1}}\beta \epsilon_k\, .
   \end{equation}
   We move one step back further and get
   \begin{equation}
       \sqrt{\bE[|H_k(X_{kh}-Y_{kh})|_{\infty}^2]} \leq \underbrace{\sqrt{\bE[|H_k(X_{kh}-\overline{Y}_{kh})|_{\infty}^2]}}_{(c)} + \underbrace{\sqrt{\bE[|H_k(\overline{Y}_{kh}-Y_{kh})|_{\infty}^2]}}_{(d)}\, ,
   \end{equation}
   where, similar as before, we define $\overline{Y}_{kh} = Y_{(k-1)h} - h\nabla V(Y_{(k-1)h}) + \sqrt{2}(B_{kh} - B_{(k-1)h})$. For $(c)$, we use the same argument as earlier to get \[(c) = \sqrt{\bE[|H_kH_{k-1}(X_{(k-1)h}-{Y}_{(k-1)h})|_{\infty}^2]}\, .\]
   For $(d)$, it holds that
   \begin{equation*}
   \begin{aligned}
       &\bE[|H_k (\overline{Y}_{kh} - Y_{kh})|_{\infty}^2] \\
       \leq &h\int_{(k-1)h}^{kh} \bE[|H_k(\nabla V(Y_t) - \nabla V(Y_{(k-1)h}))|_{\infty}^2]{\rm d}t \\
       \leq &h\int_{(k-1)h}^{kh} \int_0^1 \bE[|H_k(\nabla^2 V(uY_t + (1-u)Y_{(k-1)h})) (Y_t -Y_{(k-1)h})|_{\infty}^2] {\rm d}u {\rm d} t\\
       \leq & 4h s_{r_2} \beta^2 q^2 \int_{(k-1)h}^{kh} \bE[|Y_t - Y_{(k-1)h}|_{\infty}^2]{\rm d}t = 4h s_{r_2} \beta^2 q^2 \epsilon_{k-1}^2 \, ,
   \end{aligned}
   \end{equation*}
   where we applied (iii) of Proposition \ref{prop-inf-bound-propagator} to the the matrix $H_k\nabla^2 V(uY_t + (1-u)Y_{(k-1)h})$. \edit{Indeed, the bound will depend on $s_{r_1}$ if we use the argument in Proposition \ref{prop-inf-bound-propagator} directly. Here, we overestimate the bound by bounding $s_{r_1}$ by $s_{r_2}$, which helps organize the terms in a unified way.}
   
   As a summary, we get
   \begin{equation}
   \begin{aligned}
       &\sqrt{\bE[|X_{(k+1)h} - Y_{(k+1)h}|_{\infty}^2]} \\
       \leq &\sqrt{\bE[|H_kH_{k-1}(X_{(k-1)h}-{Y}_{(k-1)h})|_{\infty}]} + 2\beta (\sqrt{hs_{r_1}}\epsilon_k + \sqrt{s_{r_2}h}q \epsilon_{k-1})\, .
   \end{aligned}
   \end{equation}
   Iterating the above arguments $N$ times, and for simplicity of notations writing $k+N$ in place of $k+1$, we get
   \begin{equation}
       \begin{aligned}
           &\sqrt{\bE[|X_{(k+N)h} - Y_{(k+N)h}|_{\infty}^2]} \\
           \leq & \sqrt{\bE[|H_{k+N-1}H_{k+N-2}\cdots H_{k}(X_{kh}-{Y}_{kh})|_{\infty}^2]} +  2\beta(\sum_{i=1}^N q^{i-1}\sqrt{hs_{r_i}}\epsilon_{k+N-i})\, ,\\
           \leq & 2q^N \sqrt{s_{r_N}} \sqrt{\bE[|X_{kh} - Y_{kh}|_{\infty}^2]} + 2\beta(\sum_{i=1}^N q^{i-1}\sqrt{hs_{r_i}}\epsilon_{k+N-i})\, ,
       \end{aligned}
   \end{equation}
   where in the last step, we applied (ii) of Proposition \ref{prop-inf-bound-propagator} to the matrix $H_{k+r-1}H_{k+r-2}\cdots H_{k}$. We can now couple the distribution of $X_{kh}$ and $Y_{kh}$ such that $\sqrt{\bE[|X_{kh} - Y_{kh}|_{\infty}^2]} = W_{2,\ell^{\infty}}(\rho_{kh},\pi)$. With this and using the definition of the $W_{2,\ell^{\infty}}$ norm, we get
    \[W_{2,\ell^{\infty}}(\rho_{(k+N)h},\pi) \leq 2q^N\sqrt{s_{r_N}}W_{2,\ell^{\infty}}(\rho_{kh},\pi) + 2\beta(\sum_{i=1}^N q^{i-1}\sqrt{hs_{r_i}}\epsilon_{k+N-i})\, . \]
    With this expression, we can ensure contraction in the $W_{2, \ell^{\infty}}$ metric by iterating sufficiently large number of steps $N$, so that the factor $q^N$ is small enough to offset the increasing sparsity parameter $s_{r_N}$. In particular, in the dense case where $s_{r_N} = d$, such a contraction can be achieved by taking $N \sim \frac{\log d}{h}$.
    
    We can further bound $\epsilon_j^2$ using the same approach as in \eqref{eqn-discretization-bound}, which implies that
    \begin{equation}
        \epsilon_j \leq h^{3/2}\sqrt{\bE_\pi [|\nabla V(Y)|_{\infty}^2]} + 3h\sqrt{\log(2d)}\, .
    \end{equation}
    The proof is complete.
\end{proof}
\subsection{Convergence bounds}
\label{sec-proof-convergence-polynomial-sparsity}
\begin{proof}[Proof of Theorem \ref{thm-OLD-bias-under-sparsity}]
    With the bound on $\epsilon_j$ in Proposition \ref{prop-bias-under-sparsity}, we take the limit $k \to \infty$ in \eqref{eqn-iterate-contraction} to obtain the bias bound. 

   Once we have the bound on the bias, we can utilize the convergence bound for $W_2$ to establish the following:
    \begin{equation}
    \begin{aligned}
        W_{2,\ell^{\infty}}(\rho_{kh}, \pi) 
\leq W_{2,\ell^{\infty}}(\rho_{kh}, \pi_h) + W_{2,\ell^{\infty}}(\pi_h, \pi) &\leq W_2(\rho_{kh}, \pi_h) + W_{2,\ell^{\infty}}(\pi_h, \pi) \\
&\leq q^k W_2(\rho_0, \pi_h) + \mathsf{bias}(N, q, \beta, h)\, ,
    \end{aligned}
\end{equation}
where we used the contraction in $W_2$ to get $W_2(\rho_{kh}, \pi_h)\leq q^k W_2(\rho_0, \pi_h)$. The proof is complete.
\end{proof}

\begin{proof}[Proof of Proposition \ref{thm-inf-bound}]
Under the assumption, it holds that $\nabla V$ is $\sqrt{\beta}$ sub-Gaussian under $\pi$ \cite[Theorem 1.2]{altschuler2023shifted}. 
Thus, by Lemma \ref{lemma-squared-maximal-inequality-sub-Gaussian}, we get
\begin{equation}
    \sqrt{\bE_{\pi} [|\nabla V(Y) - \bE_{\pi}[\nabla V(Y)] |_{\infty}^2]} \leq 2\sqrt{\beta \log (2d)}\, .
\end{equation}
Note that $\bE_{\pi} [\nabla V(Y)] = -\int \pi \nabla \log \pi  = \int \nabla \pi = 0$. The proof is complete.
\end{proof} 

\begin{proof}[Proof of Theorem \ref{prop-polynomial-growth-s}]
    We will use Theorem \ref{thm-OLD-bias-under-sparsity}. Recall the definition $r_N = \lceil e^2 Nh\beta + \log \sqrt{d}\rceil$ and $q = \exp(-h\alpha)$. There is a universal upper bound on $s_i$, given by $s_i \leq d$. Choose $N = \lceil \frac{\log (4\sqrt{d})}{h\alpha} \rceil$, which leads to $2q^N \sqrt{s_{r_N}} \leq 1/2$. Based on Theorem \ref{thm-OLD-bias-under-sparsity}, it remains to calculate the bound on $\sum_{i=1}^N q^{i-1}\sqrt{s_{r_i}}$.

    Since $q \leq 1$, we have the bound
    \begin{equation}
    \begin{aligned}
        \sum_{i=1}^N q^{i-1}\sqrt{s_{r_i}} &\leq \sum_{i=1}^N \sqrt{C}(e^2 ih\beta + \log \sqrt{d}+2)^{\frac{n}{2}}\\
        & \leq \sqrt{C} \int_1^{N+1} (e^2 yh\beta + \log \sqrt{d}+2)^{\frac{n}{2}}\, {\rm d}y\\
        & \leq \sqrt{C}\frac{\left(e^2 (N+1)h\beta + \log \sqrt{d}+2\right)^{\frac{n}{2}+1}}{(n/2+1)(e^2h\beta)}\\
        &\leq \sqrt{C}\frac{\left(4e^2\log (4\sqrt{d}) \frac{\beta}{\alpha} + \log \sqrt{d}+2\right)^{\frac{n}{2}+1}}{(n/2+1)(e^2h\beta)} \, .
    \end{aligned}
    \end{equation}
    Therefore, we get
    \begin{equation}
    \begin{aligned}
        \frac{\beta}{1-2q^N\sqrt{s_{r_N}}}(2\sum_{i=1}^N q^{i-1}\sqrt{s_{r_i}}) &\leq 4\sqrt{C}\frac{\left(4e^2\log (4\sqrt{d}) \frac{\beta}{\alpha} + \log \sqrt{d}+2\right)^{\frac{n}{2}+1}}{(n/2+1)(e^2h)}
        \\
        =& \frac{1}{h}\left(O\big(\frac{\beta}{\alpha}\log(2d)\big)\right)^{\frac{n}{2}+1} \, .
    \end{aligned}
    \end{equation}
    On the other hand, the trivial bound $s_i \leq d$ leads to
    \begin{equation}
        \sum_{i=1}^N q^{i-1}\sqrt{s_{r_i}} \leq \frac{\sqrt{d}}{\alpha h}\, .
    \end{equation}
    Thus, we have another bound
    \begin{equation}
        \frac{\beta}{1-2q^N\sqrt{s_{r_N}}}(2\sum_{i=1}^N q^{i-1}\sqrt{s_{r_i}}) \leq \frac{4\beta \sqrt{d}}{\alpha h}\, .
    \end{equation}
    Moreover, by Proposition \ref{thm-inf-bound}, we have
    \begin{equation}
        h^2\sqrt{\bE_\pi [|\nabla V(Y)|_{\infty}^2]} + 3h^{3/2}\sqrt{\log(2d)} \lesssim \left(\sqrt{\beta}h^2 + h^{3/2}\right)\sqrt{\log(2d)}\, .
    \end{equation}
    Using the fact that $h \leq \frac{1}{\beta}$ and combining the above two inequalities leads to the final result.
\end{proof}
\section{Asymptotic Bias for General Observables}
\label{appendix-A Formula for the Asymptotic Bias}
In the proof, we use the notation $x = (x_1,...,x_d) \in \bR^d$. \edit{The assumptions used in Proposition \ref{prop-formula} are
\begin{itemize}
    \item Both $u, f$ are smooth, so that the pointwise Taylor expansion is valid.
    \item $\lim_{h\to 0} \frac{1}{h}\int  \pi(\cL_h u-\cL u) =  \int  \pi (\lim_{h\to 0}\frac{\cL_h u-\cL u}{h}).$
    \item $\lim_{h\to 0}\frac{1}{h}\int (\pi-\pi_h)(\cL_h u-\cL u)=0.$
\end{itemize}
These assumptions are made for technical reasons and arise naturally in the Taylor expansion calculations. They may be verified on a case-by-case basis. Here, for simplicity, we assume they hold and use them to derive the asymptotic expansion of the bias of certain observables.}
\begin{proof}[Proof of Proposition \ref{prop-formula}]
Let $\cL u = f$. Then, we get $\int f\pi - \int f\pi_h = -\int \cL u \pi_h = \int (\cL_h u-\cL u) \pi_h$. 
\begin{equation}
    \cL_h u(x) = \frac{\bE[u(\bar{x}+ \sqrt{2h}\xi)] - u(\bar{x}) + u(\bar{x}) - u(x)}{h}\, ,
\end{equation}
where $\bar{x} = x+h \nabla \log \pi(x)$. Then, by Taylor's expansion, we get 
\begin{equation}
\begin{aligned}
     &\bE[u(\bar{x}+ \sqrt{2h}\xi)] - u(\bar{x})\\
     =& \frac{1}{2}\cdot 2h\cdot \bE[\xi^T\nabla^2 u(\bar{x})\xi] + \sum_{|{\bm \alpha}|=4}\frac{1}{{\bm \alpha}!}D^{{\bm \alpha}}u(\bar{x})\bE[\xi^{{\bm \alpha}}]\cdot (4h^2) + o(h^2)\\
     = & h\Delta u(\bar{x}) + h^2\left(\frac{1}{2}\sum_{i=1}^d D_{i}^4 u(\bar{x}) + \sum_{1\leq i<j \leq d} D_{i}^2D_{j}^2 u(\bar{x})\right) + o(h^2)\\
     = & h\Delta u(x) + h^2(\nabla \Delta u(x))\cdot \nabla \log \pi(x) + h^2\left(\frac{1}{2}\sum_{i=1}^d D_{i}^4 u(x) + \sum_{1\leq i<j \leq d} D_{i}^2D_{j}^2 u(x)\right) + o(h^2)\\
     = &h\Delta u(x) + h^2(\nabla \Delta u(x))\cdot \nabla \log \pi(x) + \frac{1}{2}h^2\Delta^2 u + o(h^2)\, ,
\end{aligned}
\end{equation}
where in the first identity, we used the notation that ${\bm \alpha} = (\alpha_1,...,\alpha_4)$ is a multi-index with non-negative entries and $|{\bm \alpha}|=4$ implies that $\sum_i \alpha_i = 4$. Moreover ${\bm \alpha}! = \alpha_1! \cdot ...\cdot \alpha_d!$ and $\xi^{{\bm \alpha}} = \xi_1^{\alpha_1}\cdot ... \cdot \xi_d^{\alpha_d}$. In the second identity, we used the notation $D_i^k u = \frac{\partial^k}{\partial x_i^k}u$ and the fact that $\bE[\xi_i^2]=1$ and $\bE[\xi_i^4]=3$ for $\xi_i \sim \cN(0,1)$. In the third identity, we performed the Taylor expansion at $x$, based on the fact $\bar{x} = x+h \nabla \log \pi(x)$. In the last identity, we noticed the fact that the terms in the big bracket equal $\frac{1}{2}\Delta^2 u$.

Moreover, $u(\bar{x}) - u(x) = h\nabla u(x) \cdot \nabla \log \pi(x) + \frac{1}{2}h^2(\nabla \log \pi(x))^T\nabla^2 u(x) \nabla \log \pi(x) + o(h^2)$. 
Therefore, we get
\begin{equation}
\label{eqn-taylor-Lu-point-wise}
\begin{aligned}
    \cL_h u(x)-\cL u(x) = & h(\nabla \Delta u(x))\cdot \nabla \log \pi(x) + \frac{1}{2}h(\nabla \log \pi(x))^T\nabla^2 u(x) \nabla \log \pi(x) \\ &+  \frac{1}{2} h\Delta^2 u(x) + o(h)\, ,
\end{aligned}
\end{equation}
where we have used the definition $\cL u = \nabla \log \pi \cdot \nabla u + \Delta u$.

We note that
\begin{equation}
    \int \pi \nabla \Delta u\cdot \nabla \log \pi = \int \nabla \Delta u\cdot \nabla \pi = -\int \pi \Delta^2 u\, .
\end{equation}
Therefore, we get
\begin{equation}
\label{eqn-taylor-averaged}
    \int \pi(\cL_h u-\cL u) = \frac{1}{2}h\int \pi \left(\nabla \Delta u\cdot \nabla \log \pi + (\nabla \log \pi)^T(\nabla^2 u) \nabla \log \pi)\right) + o(h)\, .
\end{equation}
\edit{Note that in the above, going from the pointwise result \eqref{eqn-taylor-Lu-point-wise} to the averaged result \eqref{eqn-taylor-averaged} rigorously requires some justification. Here, it is guaranteed by assuming
\[\lim_{h\to 0} \frac{1}{h}\int  \pi(\mathcal{L}_h u-\mathcal{L} u) =  \int  \pi \lim_{h\to 0}\frac{\mathcal{L}_h u-\mathcal{L} u}{h}\, . \]
}

Now, note that, 
\begin{equation}
\begin{aligned}
    \nabla f &= \nabla \cL u = \nabla (\nabla \log \pi\cdot \nabla u + \Delta u)\\\
    & = (\nabla^2\log\pi) \nabla u + (\nabla^2 u) \nabla \log \pi + \nabla \Delta u\, ,
\end{aligned}
\end{equation}
we get 
\begin{equation}
    \int \pi(\cL_h u-\cL u) = \frac{1}{2}h\int \pi \Big( \nabla f \cdot \nabla \log \pi -(\nabla\log\pi)^T (\nabla^2\log \pi) \nabla u\Big) + o(h)\, .
\end{equation}
 Let $g = \frac{1}{2}|\nabla \log \pi|^2_2$ which satisfies the equation $\nabla g = (\nabla^2 \log \pi) \nabla\log\pi$. We know that the adjoint of the generator satisfies $\cL^*(g\pi) = \nabla \cdot (\pi \nabla g)$.
Thus
\begin{equation}
\begin{aligned}
    &\int \pi(\nabla\log\pi)^T (\nabla^2\log \pi) \nabla u \\
    =& -\int \nabla \cdot (\pi \nabla g)u \\
    = &-\int (\cL^*(g\pi))u \\
    = &-\int gf\pi\, .
\end{aligned}
\end{equation}
Therefore, 
\begin{equation}
\begin{aligned}
    \int f\pi - \int f\pi_h &=  \int \pi_h(\cL_h u-\cL u)  = \int \pi(\cL_h u-\cL u) + o(h) \\
    & = \frac{1}{2}h\int \pi \Big( \nabla f \cdot \nabla \log \pi +\frac{1}{2}|\nabla \log \pi|^2_2f\Big) + o(h)\\
    & = \frac{1}{4}h \int (-2\Delta f + |\nabla \log \pi|^2_2f)\pi + o(h)\, ,
\end{aligned}
\end{equation}
\edit{where in the first line, we have used the assumption that $\int (\pi-\pi_h)(\cL_h u-\cL u)=o(h)$}. In the last identity, we used the fact $\pi \nabla \log \pi = \nabla \pi$ and integration by parts.

 Moreover, note that $\nabla \pi = \pi \nabla \log \pi$ and thus \[\Delta \pi = \nabla \pi \cdot \nabla \log \pi  + \pi\Delta \log \pi = \pi|\nabla \log \pi|_2^2 + \pi\Delta \log \pi\, , \]
    which implies that
    \[\int |\nabla \log \pi|^2_2f\pi =\int f\Delta \pi - f\pi\Delta \log \pi = \int \pi\Delta f - f\pi\Delta \log \pi\, .  \]
    Therefore, we also have another representation of the bias
    \begin{equation}
        \int f\pi - \int f\pi_h = -\frac{1}{4}h \int (\Delta f + f\Delta \log \pi)\pi + o(h)\, .
    \end{equation}
\end{proof}
\begin{proof}[Proof of Proposition \ref{prop-gaussian-perturbation}]
We have
\edit{
\begin{equation}
    \begin{aligned}
         \left|\int f (\Delta \log \pi)\pi \right| &= \left|\int f (\Delta U )\pi \right|\\
         & =\left|\int f (\Delta U - (\int \Delta U \pi) )\pi \right|\\
         &\leq \sqrt{\int (f)^2\pi}\sqrt{\int (\Delta U - (\int \Delta U \pi))^2\pi} \\
         & \leq \sqrt{C_0} \sqrt{\int (f)^2\pi}\, .
    \end{aligned}
\end{equation}
}
When $f$ only depends on $K$ number of coordinates of $x \in \bR^d$, then the integral $\int (f)^2\pi$ will only scale with that number, rather than $d$, because the whole integral will only rely on the marginal distribution of $\pi$ at these coordinates. As such argument also applies to the term $-\frac{1}{4}\int (\Delta f) \pi$, we get that the first order term of the bias depend only on $K$ rather than $d$.

In particular, for $f(x) = \sum_{k=1}^K x^{(k)}$, we have $\Delta f = 0$. Then, using the assumed Poincar\'e inequality, 
we get $\sqrt{\int (f)^2\pi} \leq \sqrt{C_{\text{PI}}\int |\nabla f|^2\pi} = O(\sqrt{K})$ where $C_{\text{PI}}$ is the Poincar\'e constant. Thus, in such case,
\begin{equation}
    \left|\int f (\Delta \log \pi)\pi \right| = O(\sqrt{K}h) + o(h)\, .
\end{equation}
The proof is complete.
\end{proof}

\end{document}